\theoremstyle{plain}
\newtheorem{theorem}{Theorem}[section]
\newtheorem{lemma}[theorem]{Lemma}
\theoremstyle{definition}
\newtheorem{definition}[theorem]{Definition}
\theoremstyle{remark}
\newtheorem{remark}[theorem]{Remark}
\newcommand{\diff}{\mathrm{d}}
\newcommand{\Real}{\mathbb{R}}
\newcommand{\relmiddle}[1]{\mathrel{}\middle#1\mathrel{}}
\title{Adaptive Topological Feature via Persistent Homology: \\Filtration Learning for Point Clouds}
\author{
  Naoki Nishikawa\\
  The University of Tokyo\\
  \texttt{nishikawa-naoki259@g.ecc.u-tokyo.ac.jp}\\
  \and
  Yuichi Ike\\
  Kyushu University\\
  \texttt{ike@imi.kyushu-u.ac.jp}\\
  \and
  Kenji Yamanishi\\
  The University of Tokyo\\
  \texttt{yamanishi@g.ecc.u-tokyo.ac.jp}
}
\date{}
\begin{document}

\maketitle

\begin{abstract}
  Machine learning for point clouds has been attracting much attention, with many applications in various fields, such as shape recognition and material science. 
  For enhancing the accuracy of such machine learning methods, it is often effective to incorporate global topological features, which are typically extracted by persistent homology.
  In the calculation of persistent homology for a point cloud, we choose a filtration for the point cloud, an increasing sequence of spaces. 
  Since the performance of machine learning methods combined with persistent homology is highly affected by the choice of a filtration, we need to tune it depending on data and tasks. 
  In this paper, we propose a framework that learns a filtration adaptively with the use of neural networks.
  In order to make the resulting persistent homology isometry-invariant, we develop a neural network architecture with such invariance. 
  Additionally, we show a theoretical result on a finite-dimensional approximation of filtration functions, which justifies the proposed network architecture.
  Experimental results demonstrated the efficacy of our framework in several classification tasks.
\end{abstract} 

\section{Introduction}\label{sec:introduction}

Analysis of point clouds (finite point sets) has been increasing its importance, with many applications in various fields such as shape recognition, material science, and pharmacology.
Despite their importance, point cloud data were difficult to deal with by machine learning, in particular, neural networks. 
However, thanks to the recent development, there have appeared several neural network architectures for point cloud data, such as DeepSet~\citep{zaheer2017deep}, PointNet~\citep{qi2017pointnet},  PointNet++~\citep{qi2017pointnet++}, and PointMLP~\citep{ma2022rethinking}. 
These architectures have shown high accuracy in tasks such as shape classification and segmentation. 

In point cloud analysis, topological global information, such as connectivity, the existence of holes and cavities, is known to be beneficial for many tasks (see, for example, \citep{hiraoka2016hierarchical, kovacev2016using}). 
One way to incorporate topological information into machine learning is to use persistent homology, which is a central tool in topological data analysis and attracting much attention recently. 
By defining an increasing sequence of spaces, called a \emph{filtration}, from a point cloud $X \subset \Real^d$, we can compute its persistent homology, which reflects the topology of $X$.
Topological features obtained through persistent homology are then combined with neural network methods and enhance the accuracy of many types of tasks, such as point cloud segmentation~\citep{liu2022toposeg}, surface classification~\citep{zeppelzauer2016topological}, and fingerprint classification~\citep{giansiracusa2019persistent}.

The standard pipeline to use persistent homology in combination with machine learning is split into the following:
\begin{itemize}
    \item[(i)] define a filtration for data and compute the persistent homology; 
    \vspace{-0.5mm}
    \item[(ii)] vectorize the persistent homology;
    \vspace{-0.5mm}
    \item[(iii)] input it into a machine learning model. 
\end{itemize}
In (i), one needs to construct appropriate filtration depending on a given dataset and a task. 
A standard way to define a filtration is to consider the union of balls with centers at points in $X \subset \Real^d$ with a uniform radius, i.e., take the union $S_t = \bigcup_{x \in X} B(x;t), B(x;t) \coloneqq \{ y \in \Real^d \mid \|y-x\| \le t \}$ and consider the increasing sequence $(S_t)_{t}$. 
Regarding (ii), one also has to choose a vectorization method depending on the dataset and the task.
For example, one can use typical vectorization methods such as persistence landscape and persistence image. 
As for (iii), we can use any machine learning models such as linear models and neural networks, 
and the models can be tuned with the loss function depending on a specific task.  

Recently, several studies have proposed ideas to learn not only (iii) but also (i) and (ii). 
Regarding (ii), \citet{hofer2017deep} and \citet{carriere2020perslay}, for example, have proposed methods to learn vectorization of persistence homology based on data and tasks.
Furthermore, for (i), \cite{hofer2020graph} have proposed 
a learnable filtration architecture for graphs, which gives a filtration adaptive to a given dataset. 
However, this method heavily depends on properties specific to graph data. 
To establish a method to learn filtration for point clouds, 
we need to develop a learnable filtration architecture for a point cloud incorporating the pairwise distance information.
In this paper, we tackle this challenging problem to adaptively choose a filtration for point clouds.

\begin{figure*}[t]
    \centering
    \includegraphics[width=0.9\linewidth]{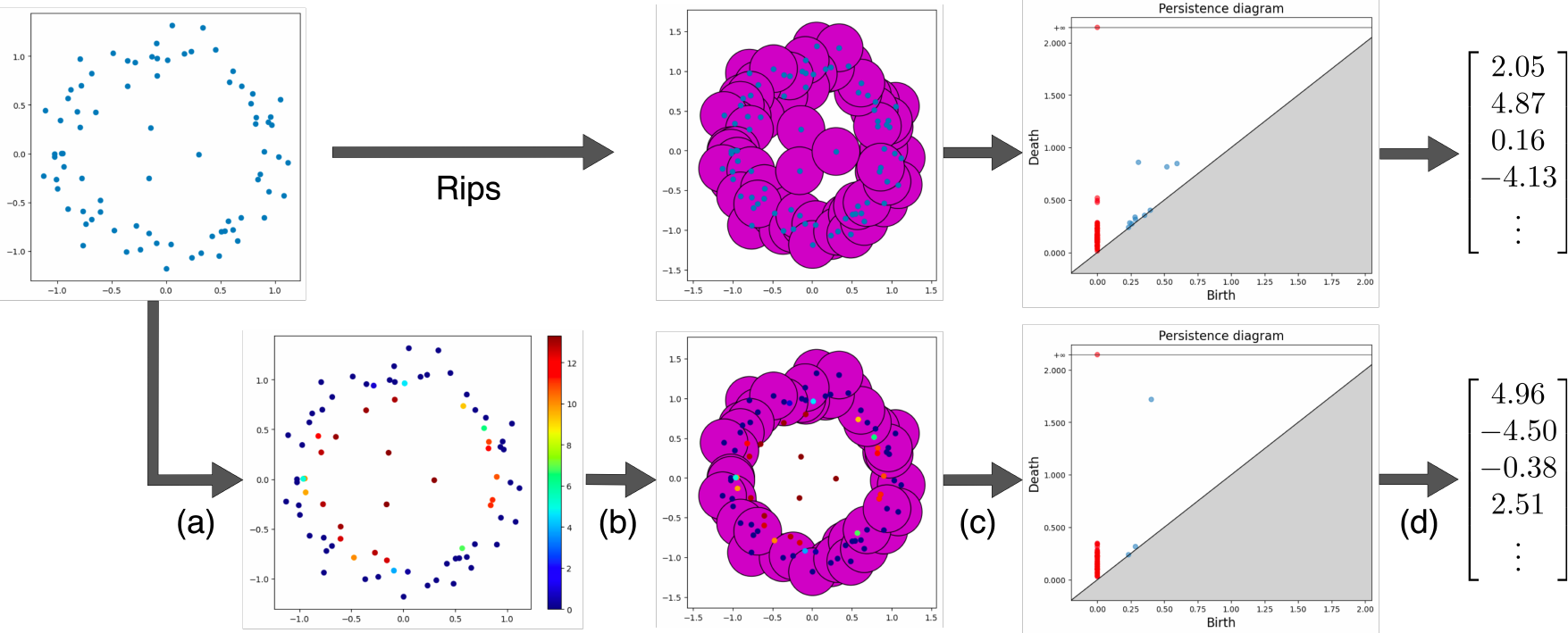}
    \caption{ Procedure to utilize our framework to a classification task and its comparison to when we use Rips filtration. 
    For a given set of points, define a value called ``weight'' for each point in (a), using a neural network method. 
    In (b), a ball centered at each point is gradually expanded. 
    The balls centered at points with larger weights expand later than the balls centered at points with smaller weights. 
    In (c), we examine the topology of the sum of the balls as they grow larger, and aggregate the information into a diagram called a persistence diagram. 
    In (d), the persistence diagram is converted to a vector, which can be used as input of machine learning models. 
    When we use standard Rips filtration, the balls expand uniformly from all of the points, so that we cannot correctly capture the large global hole. 
    On the other hand, when we use our method, the suitable filtration to solve a classification task is chosen, and the weights on the outliers get larger. 
    As a result, we can capture the large global hole in the point cloud. 
    Note that the resulting weights assigned to each point are \textit{not pre-specified, but are fully learned from the data}. 
    }
    \label{fig:procedure}
\end{figure*}

\subsection{Contributions}

In this paper, we propose a novel framework to obtain topological features of point clouds based on persistent homology. 
To this end, we employ a \emph{weighted filtration}, whose idea is to take the union of balls with different radii depending on points.
Choosing a \emph{weight function} $w \colon X \to \Real$, we can define a filtration $(S_t)_{t}$ by
\begin{align}
    S_t = \bigcup_{x \in X} B(x;t-w(x)). 
\end{align}
This type of filtration often extracts more informative topological features compared to non-weighted filtrations.
Indeed, a weighted filtration plays an important role in various practical applications~~\citep{anai2020dtm,nakamura2015persistent,hiraoka2016hierarchical,obayashi2022persistent}. 
Then, we introduce a neural network architecture to learn a map that associates a weight function $f(X,\cdot)$ with a point cloud $X \subset \Real^d$.

The desirable property for network $f$ is \emph{isometry-invariance}, i.e., $f(TX,Tx)=f(X,x)$ for any isometric transformation $T$ of $\Real^d$, 
so that the associated weighted filtration is isometry-invariant. 
To address this issue, we implement a function $f$ with a network architecture based on distance matrices. 
Then, the resulting weight function satisfies the isometry-invariance \emph{rigorously}.
Additionally, we show a theoretical result about the approximation ability for a wide class of weight functions, which supports the validity of our architecture. 

Our main contributions are summarized as follows:
\begin{enumerate}
    \item We propose a novel framework to obtain adaptive topological features for point clouds based on persistent homology. 
    To this end, we introduce an isometry-invariant network architecture for a weight function 
    and propose a way to learn a weighted filtration.
    \item We theoretically show that any continuous weight function can be approximated by the composite of two continuous functions which factor through a finite-dimensional space.
    This theoretical result motivates our architecture.
    \item We conducted experiments on public datasets and show that the topological features obtained by our method improved accuracy in classification tasks. 
\end{enumerate}

Figure~\ref{fig:procedure} presents one typical use of our framework for a classification task.
Our method first calculates the weight of each point using a neural network model. 
The colored point clouds in the second row depict the assigned weights for each point.
In the bottom one, our architecture assigns large weights on the outliers in the point cloud, 
which implies that our method can choose the filtration adaptively to data and tasks.
Although the DTM filtration is also effective in the example shown in Figure~\ref{fig:procedure}, our method can learn a filtration appropriately for a given task and data in a supervised way. 
Therefore, our method would be effective for data and tasks with difficulties other than outliers. 
In \S\ref{section:experiments}, we will show that our method improves the classification accuracy compared to the Rips or DTM filtration in several experiments.

\subsection{Related work}

A neural network architecture for finite point sets was first proposed by \citet{zaheer2017deep}, as DeepSets.
Later, there appeared several geometric neural networks for point clouds, such as PointNet~\citep{qi2017pointnet}, PointNet++~\citep{qi2017pointnet++}, and PointMLP~\citep{ma2022rethinking}. 
These studies addressed the rotation invariance by approximating rotation matrices by a neural network.
\citep{xu2021sgmnet} used Gram matrices to implement an isometry-invariant neural network for point clouds.
We instead use distance matrices for our isometry-invariant network architecture since the pairwise distance information is important in persistent homology. 

Regarding vectorization of persistent homology, in the past, it has usually been chosen in unsupervised situations, as seen in \cite{bubenik2007statistical, chung2009persistence, bubenik2015statistical, adams2017persistence, kusano2017kernel}. 
However, recent studies such as \citet{hofer2017deep} and \citet{carriere2020perslay} have proposed end-to-end vectorization methods using supervised learning.
In particular, \citet{carriere2020perslay} used the DeepSets architecture~\cite{zaheer2017deep}, whose input is a finite (multi)set, not a vector, for vectorization of persistent homology to propose PersLay.
Although our method also depends on DeepSets, it should be distinguished from PersLay; our method uses it for learning \textit{a weight for computing persistent homology} while PersLay learns \textit{a vectorization method of persistence homology}. 

For graph data, \citet{hofer2020graph} proposed the idea of filtration learning in supervised way, i.e., to learn the process in (i), and this idea was followed by \citet{horn2021topological} and \citet{zhang2022gefl}. 
They introduced a parametrized filtration with the use of graph neural networks and learned it according to a task. 
The persistent homology obtained by the learned filtration could capture the global structure of the graph, and in fact, they achieved high performance on the graph classification problem.
However, filtrations for point clouds cannot be defined in the same way, as they are not equipped with any adjacency structure initially, but equipped with pairwise distances. 
Moreover, the filtration should be invariant with respect to isometric transformations of point clouds, and we need a different network architecture to that for graphs. 
While many previous studies, such as \citet{bendich2007inferring, cang2017topologynet, cang2018representability, meng2020weighted}, have tried to design special filtration in an unsupervised way based on data properties, 
we aim to learn a filtration in a \emph{data-driven and supervised way}.
To the best of our knowledge, this study is the first attempt to learn a filtration for persistent homology on point cloud data in a supervised way.

Another approach to obtaining topological features is to approximate the entire process from (i) to (iii) for computing vectorization of persistent homology by a neural network~( \citet{zhou2022learning} and \citet{de2022ripsnet}). 
Note that similar methods are proposed for image data \citep{som2020pi} and graph data \citep{yan2022neural}.
Since these networks only approximate topological features by classical methods, they cannot extract information more than features by classical methods or extract topological information adaptively to data. 

\section{Background}\label{sec:background}

\subsection{Network Architectures for Point Clouds}\label{subsec:nn_pc}
There have been proposed several neural network architectures for dealing with point clouds for their input. 
Here, we briefly explain DeepSets~\citep{zaheer2017deep}, which we will use to implement our isometry-invariant structure. 
A more recent architecture, PointNet~\citep{qi2017pointnet} and PointMLP~\citep{ma2022rethinking}, will also be used for the comparison to our method in the experimental section. 

The DeepSets architecture takes a finite (multi)set $X=\{x_1, \dots, x_N\} \subset \Real^d$ of possibly varying size as input. 
It consists of the composition of two fully-connected neural networks $\phi_1 \colon \Real^d \to \Real^{d'}$ and $\phi_2 \colon \Real^{d'} \to \Real^{d''}$ with a permutation invariant operator $\mathbf{op}$ such as $\max, \min$ and summation: 
\begin{align}
    \mathrm{DeepSets}(X) \coloneqq
    \phi_2(\mathbf{op}(\{\phi_1(x_i)\}_{i=1}^N)).
\end{align}
For each $x_i \in X$, the map $\phi_1$ gives a representation $\phi_1(x_i)$. 
These pointwise representations are aggregated via the permutation invariant operator $\mathbf{op}$. 
Finally, the map $\phi_2$ is applied to provide the network output. 
The output of DeepSets is permutation invariant thanks to the operator $\mathbf{op}$, from which we can regard the input of the network as a set. 
The parameters characterizing $\phi_1$ and $\phi_2$ are tuned through the training depending on the objective function. 

\subsection{Persistent Homology}\label{subsec:persistent_homology}

Here, we briefly explain persistence diagrams, weight filtrations for point clouds, and vectorization methods of persistence diagrams.

\paragraph{Persistence Diagrams}
Let $S \colon \Real^d \to \Real$ be a function on $\Real^d$. 
For $t \in \Real$, the $t$-sublevel set of $S$ is defined as $S_t \coloneqq \{ p \in \Real^d \mid S(p) \leq t \}$. 
Increasing $t$ from $-\infty$ to $\infty$ gives an increasing sequence of sublevel sets of $\Real^d$, called a filtration. 
\emph{Persistent homology} keeps track of the value of $t$ when topological features, such as connected components, loops, and  cavities, appear or vanish in this sequence.
For each topological feature $\alpha$, one can find the value $b_\alpha < d_\alpha$ such that the feature $\alpha$ exists in $S_t$ for $b_\alpha \leq t < d_\alpha$.
The value $b_\alpha$ (resp.\ $d_\alpha$) is called the birth (resp.\ death) time of the topological feature $\alpha$.
The collection $(b_\alpha,d_\alpha)$ for topological features $\alpha$ is called the \emph{persistence diagram} (PD) of $f$, which is a multiset of the half-plane $\{(b,d) \mid d > b \}$. 
The information of connected components, loops, and cavities are stored in the 0th, 1st, and 2nd persistence diagrams, respectively.

Given a point cloud $X \subset \Real^d$, i.e., a finite point set, one can take $S$ to be the distance to point cloud, defined by 
\begin{equation}
    S(z)=\min_{x\in X}\|z-x\|.
\end{equation}
Then, the sublevel set $S_t$ is equal to the union of balls with radius $t$ centered at points of $X$: $\bigcup_{x \in X} B(x;t)$, where $B(x;t) \coloneqq \{ y \in \Real^d \mid \|x-y\| \leq t \}$. 
In this way, one can extract the topological features of a point cloud $X$ with the persistence diagram. 
The persistent homology of this filtration can be captured by the filtration called \v{C}ech or Vietoris-Rips filtration (Rips filtration for short). In this paper, we use Rips filtration for computational efficiency. See Appendix \ref{sec:rips_filtration} for details. 
The Rips filtration can be computed only by the pairwise distance information. 

\paragraph{Weighted Filtrations for Point Clouds}\label{subsec:weighted}
While the Rips filtration considers a ball with the same radius for each point, one can consider the setting where the radius value depends on each point (see, for example, \citet{edelsbrunner1992weighted}). 
For instance, \citet{hiraoka2016hierarchical,nakamura2015persistent,obayashi2022persistent}, which use persistent homology to find the relationship between the topological nature of atomic arrangements in silica glass and whether the atomic arrangement is liquid or solid, use such a filtration.
Given a point cloud $X \subset \Real^d$, one can define the radius value $r_x(t)$ at time $t$ for $x \in X$ as follows.
We first choose a function $w \colon X\to\Real$, which is called \emph{weight}, and define
\begin{align}
    r_x(t)\coloneqq
    \begin{cases}
        -\infty & \text{if }t<w(x), \\
        t-w(x) & \text{otherwise.}
    \end{cases}
\end{align}
The associated weighted Rips filtration is denoted by $R[X,w]$.
One choice as a weight function is the distance-to-measure (DTM) function.
The resulting filtration is called a DTM-filtration, for which several theoretical results are shown in \citet{anai2020dtm}. 
One of those results shows that a persistent homology calculated by the DTM-filtrations is robust to outliers in point clouds, which does not hold for the Rips filtrations.

\paragraph{Vectorization Methods of Persistence Diagrams}
Persistence diagrams are difficult to handle by machine learning since they are a multiset on a half-plane. 
For this reason, several vectorization methods of persistence diagrams have been proposed, such as persistence landscape~\citep{bubenik2015statistical} and persistence image~\citep{adams2017persistence}.
\citet{carriere2020perslay} proposed the method called PersLay, 
which vectorizes persistence diagrams in a data-driven way using the structure based on DeepSets. 
In this paper, we utilize this method as a vectorization method. 
Note that PersLay can be replaced with any vectorization method in our architecture.

Given a function transformation map $\phi_c$ with a learnable parameter~$c$
and permutation invariant operator $\mathbf{op}$, we can vectorize a persistence diagram $D$ by PersLay with
\begin{align*}
    \mathrm{PersLay}(D)\coloneqq 
    \mathbf{op}
    (\{\phi_c(q)\}_{q\in D}).
\end{align*}
If we choose appropriate parametrized function $\phi_c$, 
we can make PersLay similar to classical vectorization methods such as persistence landscape and persistence image.
In this paper, we utilized the following map as $\phi_c$, which makes PersLay similar to persistence image:
\begin{align*}
    &\phi_c(q) = \left[
        \exp\left(-\frac{\|q_1-c_1\|^2}{2}\right), \ldots, 
        \exp\left(-\frac{\|q_M-c_M\|^2}{2}\right)
    \right]^\top. 
\end{align*}

\section{Proposed Framework}
In this section, we describe our framework to obtain topological representations of given point clouds.
Our idea is to use persistence homology for extracting topological information and model a weight function $w$ in \S\ref{subsec:weighted} by a neural network, which can be learned adaptively to data.
Moreover, we propose a way to combine the topological features with deep neural networks. 

\subsection{Filtration Learning for Point Clouds}\label{subsec:filtration_learning}

First, we introduce a structure to learn a filtration on point clouds end-to-end from data. 
For that purpose, we learn a weight function $w$ for a weighted filtration, by modeling it by a neural network. 
We assume that the weight function depends on an input point cloud $X = \{ x_1,\dots,x_N \} \subset \Real^d$, i.e., $w$ is of the form $f(X,\cdot) \colon \Real^d \to \Real$. 
Designing such a function by a neural network has the following three difficulties:
\begin{enumerate}
    \item The weight function $f(X,\cdot)$ should be determined by the whole point cloud \emph{set} $X$ and does not depend on the order of the points in $X$. \label{proposed:input_sets}
    \item The resulting function should be \emph{isometry-invariant}, i.e, for any isometric transformation $T$, any point cloud $X$, and $x\in X$, it holds that $f(TX,Tx)=f(X,x)$. \label{proposed:isometry}
    \item The output of the $f(X, x)$ should have both of global information $X$ and pointwise information of $x$. \label{proposed:concatenate}
\end{enumerate}
To address the problem~\ref{proposed:input_sets}, we use the DeepSets architecture (\S\ref{subsec:nn_pc}). 
To satisfy invariance for isometric transformations as in problem~\ref{proposed:isometry},
we utilize the distance matrix $D(X)=(d(x_i,x_j))_{i,j=1}^N \in \Real^{N \times N}$ and the relative distances $d(X, x)=(\|x-x_j\|)_{j=1}^N$ instead of coordinates of points, since $D(TX)=D(X)$ and $d(TX,Tx)=d(X,x)$ hold for any isometric transformation $T$.
Finally, to deal with problem \ref{proposed:concatenate}, we regard $f$ as a function of the feature of $D(X)$ and the feature of $d(X, x)$.
Below, we will explain our implementation in detail. 

\begin{figure}[!tb]
    \centering
    \includegraphics[width=0.6\linewidth]{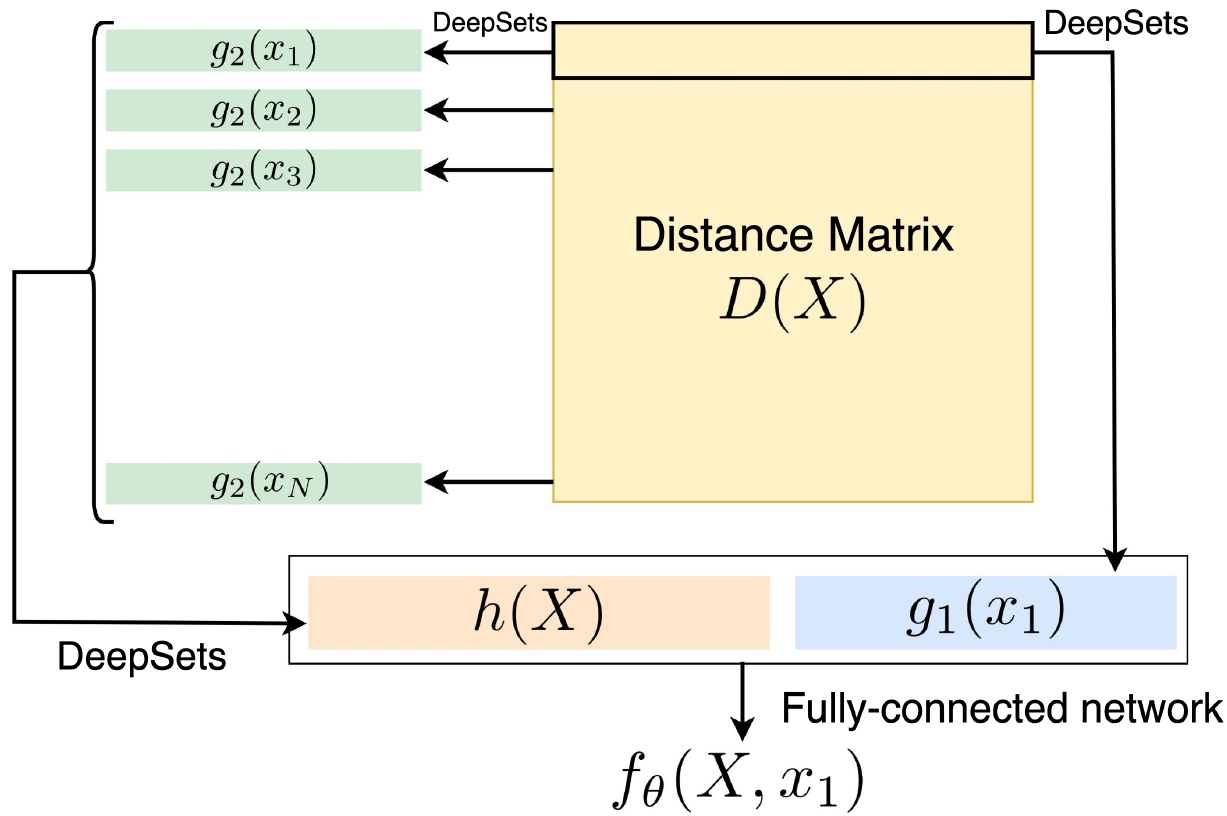}
    \caption{The architecture of the network. This figure is for the case of getting the weight of point $x_1$ in the point cloud $X$.
    Note that $d(X,x_i)$ is equal to (the transpose of) the $i$-th row of $D(X)$, i.e., $(d(x_i, x_j))_{j=1}^N$.}
    \label{fig:architecture}
\end{figure} 

First, we implement a network to obtain a pointwise feature vector of a point $x \in \Real^d$ by
\begin{align}
    g_1(x)
    \coloneqq
    \phi^{(2)}(
        \mathbf{op} ( \{\phi^{(1)}(\|x-x_j\|)\}_{j=1}^N )
    ),
    \label{eq:obtain_local_feature}
\end{align}
where $\phi^{(1)}$ and $\phi^{(2)}$ are fully-connected neural networks.
This structure~\eqref{eq:obtain_local_feature} can be regarded as the DeepSets architecture whose input is a set of one-dimensional vectors, hence it is permutation invariant.

Next, we use the DeepSets architecture to obtain a feature vector for the entire point cloud $X$ as follows. 
We first implement pointwise feature vectors with the same architecture as $g_1$, i.e., 
\begin{align}
    g_2(x_i)
    \coloneqq
    \phi^{(4)}(
        \mathbf{op} ( \{\phi^{(3)}(d(x_i, x_j))\}_{j=1}^N )
    ),
\end{align}
where $\phi^{(3)}$ and $\phi^{(4)}$ are fully-connected neural networks.
Then, we get a feature vector $h(X)$ for the entire point cloud $X$ by applying the DeepSets architecture again:
\begin{align}
    h(X) = \phi^{(5)}(
        \mathbf{op}(\{g_2(x_i)\}_{i=1}^N)
    ), 
    \label{eq:distmatnet}
\end{align}
where $\phi^{(5)}$is a fully-connected neural network.
Here we omit the first inner network for the DeepSets architecture since it can be combined with $\phi^{(4)}$ in $g_2$.

Finally, we implement $f$ as a neural network whose input is the concatenated vector of $h(X)$ and $g_1(x)$. 
Specifically, we concatenate $h(X)$ with the feature vector $g_1(x)$ for $x \in \Real$ to get a single vector, and then input it into a fully-connected neural network $\phi^{(6)}$: 
\begin{align}
    f_\theta(X,x)=\phi^{(6)}([h(X),g_1(x)]^\top),
\end{align}
where $\theta \coloneqq(\theta_k)_{k=1}^6$ is the set of parameters that appear in network $\phi^{(k)}$~$(k=1, \dots, 6)$.
The overall neural network architecture is shown in Figure~\ref{fig:architecture}.

\begin{remark}
    Since our implementation of the weight function depends on distance matrices, not on the coordinates of each point, we can apply our method to a dataset given in distance matrix format. 
    In fact, we will apply our framework to such a dataset in Section~\ref{section:experiments}.
\end{remark}

\begin{remark}
    If we want to consider features other than the position information assigned to each point in the calculation of the filtration weights, as in the silica glass studies, we vectorize those features and concatenate them together. 
\end{remark}

By using the weight function $f_\theta(X, \cdot)$, we can calculate the persistent homology of filtration $R[X, f_\theta(X,\cdot)]$.
Then, we can obtain the vectorization using PersLay parametrized by $c$.
Then, we can input this vector into a machine learning model $F_{\theta'}$ with parameter $\theta'$.
Given the dataset $\{(X_k, y_k)\}_{k=1}^M$, the objective function to be minimized can be written as 
\begin{align*}
    \mathcal{L}(\theta, \theta', c)
    \coloneqq 
    \frac{1}{M}\sum_{k=1}^M \ell\left(
        F_{\theta'}(\mathrm{PersLay}(R[X_k, f_\theta(X_k,\cdot)])), y_k
    \right), 
\end{align*}
where $\ell$ is an appropriate loss function depending on tasks. 
Typically, if the task is regression, $\ell$ is the mean square loss, and if the task is classification, $\ell$ is the cross entropy loss. 
The optimization problem 
\begin{align}
    \min_{\theta, \theta', c}~\mathcal{L}(\theta, \theta', c)
\end{align}
is solved by stochastic gradient descent.
Note that the differentiability of the objective function and the convergence of the optimization algorithm is guaranteed by results in \citet{carriere2021optimizing}.

\subsection{Combining Topological Features with a DNN-based method}\label{sec:TwoPhaseTraining}
While we can use the topological features alone, 
we can also concatenate them with the features computed by a deep neural network (DNN). 
In order to do this, we need to learn not only (a) networks in our framework but also (b) DNN. 
Although (a) and (b) can be learned simultaneously since the output of the resulting feature is differentiable with all parameters, the networks cannot be successfully optimized in our experiments.
This would be because the loss function is not smooth with respect to the parameters in (a), which can also make the optimization of (b) unstable. 
To deal with this issue, we propose to learn (a) and (b) separately.

Let us describe the whole training procedure briefly here. 
Suppose $X$ is a input point cloud. 
We write $\Psi_{\mathrm{topo}}(X)\in\Real^{L_1}$ for the topological feature which comes from our method, 
and $\Psi_{\mathrm{DNN}}(X)\in\Real^{L_2}$ for the feature that comes from a DNN-based method.
Let $\ell$ be a loss function for the classification task and 
$n$ be a number of classes. 
We propose the following two-phase training procedure:

\noindent
\textbf{1st Phase.} Let $C_1\colon \Real^{L_2}\to \Real^n$ be a classifier that receives feature from $\Psi_{\mathrm{DNN}}$.
Then, $C_1$ and $\Psi_{\mathrm{DNN}}$ are learned through the classification task, which is achieved by minimizing
$\sum_{k=1}^M \ell(C_1(\Psi_{\mathrm{DNN}}(X_k)), y_k)$.

\noindent
\textbf{2nd Phase.} Let $C_2\colon \Real^{L_1+L_2}\to \Real^n$ be a classifier.
\emph{We discard the classifier $C_1$ and fix $\Psi_{\mathrm{DNN}}$}.
Then, $C_2$ and $\Psi_{\mathrm{topo}}$ are learned through the classification task, 
which is achieved by minimizing
$\sum_{k=1}^M \ell(C_2([
\Psi_{\mathrm{DNN}}(X_k)^\top, \Psi_{\mathrm{topo}}(X_k)^\top
]^\top), y_k)$.

This training procedure is observed to make the optimization stable 
and to help the architecture to achieve higher accuracy. 
We show the experimental result in \S~\ref{section:experiments}.

\section{Approximation Ability of Weight Function}\label{sec:approximation_theory}

In this section, we theoretically show that our architecture has a good expression power. 
We prove that approximation of any continuous map can be realized by our idea to concatenate two finite-dimensional feature vectors. 

Consider the space $2^{[0, 1]^m}$ of subsets in $[0,1]^m$ equipped with the Hausdorff distance. 
For a fixed $N\in\mathbb N$, we define the following subspace of $2^{[0, 1]^m}$:
\begin{align*}
    \mathcal X\coloneqq\{X\subset[0, 1]^m\mid |X|\leq N\}.
\end{align*}
The following theorem states that we can approximate any continuous function on $\mathcal X \times [0,1]^m$ by concatenating the finite dimensional feature vectors of a point cloud and a point.
The proof of the theorem is given in Appendix~\ref{sec:approximation_proof}. 

\begin{theorem}\label{prop:approximation}
    Let $f \colon \mathcal X \times [0,1]^m \to \Real$ be a continuous function. 
    Then for any $\epsilon > 0$, there exist $K\in\mathbb N$ and continuous maps 
    $\psi_1\colon\mathcal X\to \Real^K, \psi_2\colon\Real^{K+m}\to\Real$ 
    such that
    \begin{align*}
        \sup_{X\in\mathcal{X}}
        \int_{[0, 1]^m}
        \left(
            f(X, x)-\psi_2([\psi_1(X)^\top, x^\top]^\top)
        \right)^2\diff x < \epsilon.
    \end{align*}
\end{theorem}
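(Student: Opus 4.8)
The plan is to reduce the claim to a uniform pointwise approximation built from a partition of unity on $[0,1]^m$, with the coefficient map chosen so as to be manifestly continuous in $X$. First I would record that $\mathcal X$ is compact: the hyperspace of nonempty closed subsets of the compact space $[0,1]^m$ is compact under the Hausdorff distance, and the constraint $|X|\le N$ is preserved under Hausdorff limits (pad each $X_k$ to $N$ points, extract a convergent subsequence of the resulting $N$-tuples in $([0,1]^m)^N$, and note the limit has at most $N$ distinct points). Hence $\mathcal X\times[0,1]^m$ is compact, so $f$ is bounded and, crucially, \emph{uniformly continuous}.

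Next I would fix the target $\eta=\sqrt{\epsilon}/2$ and let $\delta>0$ be a modulus of uniform continuity for $f$, so that $\|x-x'\|<\delta$ forces $|f(X,x)-f(X,x')|<\eta$ for \emph{every} $X\in\mathcal X$ simultaneously. Cover $[0,1]^m$ by finitely many open sets $U_1,\dots,U_K$ of diameter below $\delta$, choose a continuous partition of unity $e_1,\dots,e_K$ subordinate to this cover (so $e_j\ge 0$, $\sum_j e_j\equiv 1$ on $[0,1]^m$, and $\supp e_j\subset U_j$), extend each $e_j$ to a continuous function on $\Real^m$ (e.g.\ by the Tietze extension theorem), and pick a representative $p_j\in U_j$.

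Then I would set, writing a vector of $\Real^{K+m}$ as $(v,x)$ with $v\in\Real^K$ and $x\in\Real^m$,
\begin{align*}
  \psi_1(X) = \bigl(f(X,p_1),\dots,f(X,p_K)\bigr)\in\Real^K,
  \qquad
  \psi_2(v,x) = \sum_{j=1}^K v_j\, e_j(x).
\end{align*}
The decisive observation is that each coordinate $X\mapsto f(X,p_j)$ is continuous (since $f$ is continuous and $p_j$ is fixed), so $\psi_1$ is continuous, while $\psi_2$ is a finite sum of products of continuous functions and hence continuous on $\Real^{K+m}$. Consequently $\psi_2([\psi_1(X)^\top,x^\top]^\top)=\sum_j f(X,p_j)e_j(x)$, and using $\sum_j e_j(x)=1$ together with $e_j\ge 0$,
\begin{align*}
  \bigl|f(X,x)-\psi_2([\psi_1(X)^\top,x^\top]^\top)\bigr|
  \le \sum_{j=1}^K \bigl|f(X,x)-f(X,p_j)\bigr|\,e_j(x) < \eta,
\end{align*}
because only terms with $e_j(x)>0$ contribute and for those $x\in U_j$ gives $\|x-p_j\|<\delta$. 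This bound is uniform in $X$ and $x$, so squaring and integrating over $[0,1]^m$ (of unit volume) gives $\eta^2=\epsilon/4<\epsilon$, as required.

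I expect the main obstacle to be keeping the finite-dimensional representation \emph{continuous in $X$} while approximating uniformly over the whole family $\{f(X,\cdot)\}_{X\in\mathcal X}$: a projection-type coefficient would approximate a fixed $X$ but is awkward to control uniformly, whereas the sampling coefficients $f(X,p_j)$ make continuity of $\psi_1$ immediate and delegate the uniform control entirely to the uniform continuity of $f$. The second ingredient not to overlook is compactness of $\mathcal X$, which is exactly what upgrades continuity of $f$ to the uniform continuity used above.
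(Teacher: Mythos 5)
Your proof is correct, and it takes a genuinely different route from the paper's. The paper works in the Hilbert space $L^2([0,1]^m)$: it proves compactness of $\mathcal X$ by induction on $N$, shows the map $X\mapsto f(X,\cdot)$ is continuous into $L^2$ (via a lemma upgrading continuity of $f$ to sup-norm continuity in $X$), concludes the family $\{f(X,\cdot)\}_{X\in\mathcal X}$ is compact in $L^2$, approximates it by orthogonal projection onto the span of a finite $\epsilon$-net, takes $\psi_1(X)=(\langle f(X,\cdot),e_k\rangle)_{k=1}^K$ as projection coefficients, and finally replaces the (merely $L^2$) basis elements $e_k$ by continuous approximants $\hat e_k$ using density of continuous functions, with a Cauchy--Schwarz estimate to absorb the substitution error. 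You bypass all of this machinery: compactness of $\mathcal X\times[0,1]^m$ gives uniform continuity of $f$, and a partition of unity with \emph{sampled} coefficients $f(X,p_j)$ does the rest. Your approach buys three things: the conclusion is strictly stronger (a uniform bound $\sup_{X,x}|f(X,x)-\psi_2([\psi_1(X)^\top,x^\top]^\top)|<\sqrt\epsilon/2$, which trivially implies the stated $L^2$ bound); the continuity of $\psi_1$ is immediate rather than requiring a separate inner-product estimate; and there is no density/substitution step, since your $e_j$ are continuous by construction (the Tietze extension is cosmetic --- only values on $[0,1]^m$ enter the integral). What the paper's projection route buys in exchange is a coefficient map $\psi_1$ that is canonical (optimal in $L^2$ for the chosen subspace) rather than tied to sample points, which is arguably a more natural fit for the ``finite-dimensional feature vector'' interpretation motivating the architecture, and its compactness proof for $\mathcal X$ is self-contained where you invoke compactness of the hyperspace --- though your padding argument (map $([0,1]^m)^N\to\mathcal X\setminus\{\emptyset\}$ sending a tuple to its underlying set is continuous and surjective) is a legitimate and arguably slicker alternative to the paper's induction; just note that the paper's $\mathcal X$ formally contains $\emptyset$, which is an isolated point in the Hausdorff metric and so harmless for compactness. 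Both constructions end with $\psi_2$ linear in the first $K$ coordinates, so neither demonstrates more expressive power of the second network than the other.
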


\section{Experiments}\label{section:experiments}
To investigate the performance of our method in classification tasks, we conducted numerical experiments on two types of public real-world datasets.
In \S\ref{subsec:protein}, we present a simple experiment on a protein dataset to demonstrate the efficacy of our method. 
Next, in \S\ref{subsec:experiment_3dcad}, we show the result of a experiment on a 3D CAD dataset. 
The classification accuracy was used as the evaluation metric.

The first experiment was conducted to show the validity of our method where we only used the topological feature, without DNN features. 
In the second experiment, on the other hand, we combine the topological features and features from DNNs using the method described in \S\ref{sec:TwoPhaseTraining}.

All the code was implemented in Python 3.9.13 with PyTorch 1.10.2.
The experiments were conducted on CentOS 8.1 with AMD EPYC 7713 2.0 GHz CPU and 512 GB memory.
Persistent homology was calculated by Python interface of GUDHI\footnote{\url{https://gudhi.inria.fr/}}.
All of the source codes we used for experiments is publically available
\footnote{\url{https://github.com/git-westriver/FiltrationLearningForPointClouds}}.
More details of the experiments can be found in Appendix~\ref{sec:experiment_detail}.

\subsection{Protein classification}\label{subsec:protein}

\paragraph{Dataset.}
In this section, we utilize the protein dataset in \citet{kovacev2016using} consisting of dense configuration data of 14 types of proteins.
The authors of the paper analyzed the maltose-binding protein (MBP), whose topological structure is important for investigating its biological role. 
They constructed a dynamic model of 370 essential points each for these 14 types of proteins and calculated the cross-correlation matrix $C$ for each type\footnote{The correlation matrix $C$ can be found in \url{https://www.researchgate.net/publication/301543862_corr}.}.
They then define the distance matrix, called the \emph{dynamical distance}, by 
$D_{ij} = 1 - |C_{ij}|$, 
and they use them to classify the proteins into two classes, ``open" and ``close" based on their topological structure. 
In this paper, we subsampled 60 points and used the distance matrices with the shape $60\times60$ for each instance. 
We subsampled 1,000 times, half of which were from open and the rest were from closed.
The off-diagonal elements of the distance matrix were perturbed by adding noise from a normal distribution with a standard deviation of 0.1.

\paragraph{Architectures and comparison baselines.}
Given a distance matrix, we computed the proposed adaptive filtration and calculate the persistent homology. 
Then, the 1st persistence diagram of that filtration was vectorized by PersLay.
The obtained feature was input into a classifier using a linear model, 
and all the models were trained with cross entropy loss.
For comparison, we evaluate the accuracy when we replace our learnable filtration with a fixed filtration. 
As a fixed filtration, we used Rips or DTM filtration~\citep{anai2020dtm}, the latter of which is robust to outliers.
The hyperparameter for DTM filtration was set to maximize the classification accuracy.
Additionally, we replaced our topological feature by the output of \eqref{eq:distmatnet}, which we call DistMatrixNet, and compared the classification accuracy. 
Note that we used DistMatrixNet not for the computation of filtration weights
but for the direct computation of point clouds' feature. 
Since the protein dataset is given in the format of distance matrices, we did not utilize standard neural network methods for point clouds. 

\paragraph{Results.}
\begin{table}[!t]
    \caption{
    The accuracy of the binary classification task of protein structure. 
    We compared our methods with DistMatrixNet and persistent homology with Rips/DTM filtration.
    We can see that our method performs better than other three methods.
    }
    \vspace{2mm}
    \label{table:result_protein}
    \centering
    \begin{tabular}{cccc}
        \hline
        DistMatrixNet & Rips & DTM & Ours\\
        \hline \hline
        65.0 ± 12.0
        & 79.9 ± 3.0
        & 78.0 ± 1.6
        & \textbf{81.9 ± 2.1}
         \\
        \hline
    \end{tabular}
\end{table}

We present the results of the binary classification task of proteins in Table~\ref{table:result_protein}.
We can see that our framework overperformed the other methods in terms of the accuracy of the classification. 

We describe some observations below:
Firstly, the result of our method is better than those of Rips and DTM filtrations. 
This would be because our method learned a weighted filtration adaptively to data and tasks and could provide more informative topological features than the classical ones. 
Secondly, our method achieved higher accuracy compared to DistMatrixNet, which has a similar number of parameters to the proposed method.
This suggests that persistent homology is essential in the proposed method rather than the expressive power of DistMatrixNet. 

\subsection{3D CAD data classification}\label{subsec:experiment_3dcad}

\paragraph{Dataset.}
In this section, we deal with the classification task on ModelNet10~\citep{wu20153d}.
This is a dataset consisting of 3D-CAD data given by collections of polygons of 10 different kinds of objects, such as beds, chairs, and desks. 
We choose 1,000 point clouds from this dataset so that the resulting dataset includes 100 point clouds for each class.
The corresponding point cloud was created with the sampling from each polygon with probability proportional to its area.
Moreover, to evaluate the performance in the case where the number of points in each point cloud is small, we subsampled 128 points from each point cloud. 
The number of points is relatively small compared with the previous studies, 
but this setting would be natural from the viewpoint of practical applications. 
We added noise to the coordinates of each sampled point using a normal distribution with a standard deviation of 0.1.

\paragraph{Architecture and comparison baselines.}
We utilized the two-phase training procedure described in \S \ref{sec:TwoPhaseTraining}, where we concatenated a feature computed by DNN-based method and a topological feature. 
As DNN-based methods, which were trained through the 1st Phase, we utilized DeepSets~\citep{zaheer2017deep}, PointNet~\citep{qi2017pointnet}, and PointMLP~\citep{ma2022rethinking}.
A given point cloud was input into our adaptive filtration architecture, and the resulting 1st persistence diagram was vectorized by PersLay.
The concatenated features were input into a classifier based on a linear model, and the parameters are tuned with the cross entropy loss. 

We compared the result of 1st Phase, which is the accuracy of a DNN alone, and that of 2nd Phase to validate the efficacy of our learnable filtration.
We also compared our method with a fixed filtration, Rips or DTM, instead of our learnable filtration.
Furthermore, we replaced our topological feature replaced by the output of DistMatrixNet~\eqref{eq:distmatnet} and compared the classification accuracy. 

\paragraph{Results. }
The results are shown in Table \ref{table:result_modelnet10}. 
Below, we present some observations for the results.

Firstly, our method outperformed Rips filtration, which is one of the most common filtrations. 
Moreover, the accuracy of our method is almost the same as or higher than that of DTM filtration.
While DTM filtration has hyperparameters that need to be tuned, our method achieved such a result by learning an adaptive filtration automatically through the training. 
This result implies that our method is useful in efficiently choosing filtrations.

Secondly, our method again overperformed DistMatrixNet.
This indicates that the combination of persistent homology and DistMatrixNet was crucial similarly to the protein dataset.

Thirdly, and most importantly, it can be observed that the classification accuracy is better when our method was concatenated with DeepSets/PointNet compared to using DeepSets/PointNet alone.
Additionally, the accuracy when we combine our method and DeepSets/PointNet is competitive when using PointMLP alone. 
These observations indicate that concatenating the topological features obtained by our method yields positive effects.
The accuracy when we combine our method and PointMLP is not higher than that of PointMLP.
This would mean that PointMLP has already captured enough information including topological features during the 1st phase, so that the information obtained by persistent homology may be redundant, potentially negatively impacting the classification.

\begin{table}[t]
    \caption{
    Results for the classification task of 3D CAD data. 
    The row named ``1st Phase'' shows the results when we used the feature calculated by DNN only, 
    and the rows named ``2nd Phase'' shows the results 
    when we concatenated them with the feature calculated by DistMatrixNet or topological feature obtained by persistent homology. 
    Compared to using the DeepSets/PointNet alone, we can achieve higher accuracy when we concatenate the topological feature obtained by our method.
    Additionally, regardless which DNN method we use, using topological feature computed by our method improves accuracy better than using topological feature computed by Rips filtration and feature computed by DistMatrixNet.
    Moreover, the accuracy when we combine our method with DeepSets/PointNet is competitive with using PointMLP alone.
    }
    \vspace{2mm}
    \label{table:result_modelnet10}
    \centering
\begin{tabular}{ccccc}
\hline
                 &  & DeepSets            & PointNet            & PointMLP            \\ \hline
\# of parameters &                       & 813,488              & 3,472,500             & 3,524,386             \\ \hline
1st Phase        &                       & 65.7 ± 1.4          & 64.3 ± 4.4          & \textbf{68.8 ± 6.3} \\ \hline
2nd Phase        & DistMatrixNet         & 65.7 ± 4.8          & 55.7 ± 13.9         & 53.8 ± 7.4          \\
                 & Rips                  & 67.0 ± 2.6 & 68.4 ± 2.4          & 57.8 ± 12.4         \\
                 & DTM                   & \textbf{68.0 ± 2.5} & 68.7 ± 2.3 & 57.2 ± 6.8 \\ \hline
                 & Ours                  & \textbf{67.5 ± 2.5} & \textbf{68.8 ± 2.0} & 60.0 ± 6.3 \\ \hline
\end{tabular}
\end{table}

\section{Conclusion}

In this paper, we tackled the problem to obtain adaptive topological features of point clouds. 
To this end, we utilize a weighted filtration and train a neural network that generates a weight for each point. 
Since the model is desired to be invariant with respect to isometric transformation, we proposed a network architecture that satisfies such invariance. 
We theoretically showed a finite-dimensional approximation property for a wide class of weight functions, which supports the validity of our architecture. 
Our numerical experiments demonstrated the effectiveness of our method in comparison with the existing methods. 

\paragraph{Limitations and future work}
There are several limitations and remained future work for this study.
First, while we concentrated on determining the weight function of the weighted Rips filtration in this paper, we can also consider other kinds of filtrations.
For example, we can expand the balls from each point at different speeds.
Second, we did not applied our method to larger datasets as we needed to compute persistent homology repeatedly in the training, which make the computational cost higher.
In fact, it took about seven hours to train the neural networks in our method for each cross-validation. 
While we believe our method is meaningful since it sometimes improves classification accuracy, this is one of the largest limitations of our study.
Addressing this issue is one of our future work.
Third, since our method adaptively chooses a filtration, our framework is expected to work well even if a subsequent machine learning model has a simple architecture. 
We plan to validate the hypothesis experimentally and theoretically. 

\section*{Acknowledgments}

We appreciate Marc Glisse and Th\'{e}o Lacombe for the helpful discussion. 
This work was partially supported by Grant-in-Aid for Transformative Research Areas (A) (22H05107) and JST KAKENHI JP19H01114.

\clearpage

\bibliography{refs}

\newpage
\appendix
\onecolumn
\begin{center}
{\bf \LARGE ------~Appendix~------}
\end{center}

\section{Simplicial complexes and filtrations}\label{sec:rips_filtration}

In this appendix, we briefly recall the definitions of simplicial complexes and filtrations. 

\begin{definition}
    Let $V$ be a finite set.
    A (finite) \emph{simplicial complex} whose vertex set is $V$ is a collection $K$ of subsets of $V$ satisfying the following conditions.
    \begin{enumerate}
        \item[(1)] $\emptyset \not\in K$;
        \item[(2)] for any $v \in V$, $\{v\} \in K$;
        \item[(3)] if $\sigma \in K$ and $\emptyset \neq \tau \subset \sigma$, then $\tau \in K$.
    \end{enumerate}
    An element $\sigma$ of $K$ with the cardinality $\# \sigma=k+1$ is called a $k$-simplex.

    A \emph{subcomplex} of a simplicial complex $K$ is a subset $K'$ of $K$ that is a simplcial complex itself.
\end{definition}

\begin{definition}
    A filtration of a simplicial complex $K$ is an increasing family $(K_t)_{t \in \Real}$ of subcomplexes of $K$.
\end{definition}

Given a point cloud in $\Real^d$, we can construct the \v{C}ech filtration as follows.
\begin{definition}
    Let $X \subset \Real^d$ be a point cloud and $t \in \Real$. 
    One defines a simplicial complex $\check{C}[X]_t$ as follows:
    \begin{equation}
        \check{C}[X]_t\coloneqq\left\{
            \emptyset\neq\sigma\subset X
            \relmiddle{|}
            \bigcap_{x\in\sigma} B(x;t)\neq\emptyset
        \right\}.
    \end{equation}
    The family $(\check{C}[X]_t)_{t\in\Real}$ forms a filtration, 
    which is called the \emph{\v{C}ech filtration} of $X$. 
\end{definition}
The \v{C}ech complex $\check{C}[X]_t$ is known to be homotopy equivalent to the union of balls $\bigcup_{x\in X} B(x, t)$. 
However, it takes much computational cost to compute the \v{C}ech complex. 
Therefore, we utilize the Rips filtration instead.
\begin{definition}
    Let $X \subset \Real^d$ be a point cloud and $t \in \Real$. 
    One defines a simplicial complex $R[X]_t$ by 
    \begin{equation}
        \{x_0,\dots,x_k \} \in R_p[X]_t 
        :\Longleftrightarrow \text{$\|x_i-x_j\| \le 2t$ for any $i,j \in \{0,\dots,k\}$}.
    \end{equation}
    The family $(R[X]_t)_{t \in \Real}$ forms a filtration, which is called the \emph{Rips filtration} of $X$. 
\end{definition}

When a point cloud $X$ is equipped with a weight function on $X$, we obtain the weighted Rips filtration as follows.
This definition is totally same as that of \citet{anai2020dtm}.
\begin{definition}
    Let $X \subset \Real^d$ be a point cloud, $w \colon X \to \Real$ a function. 
    One defines a simplicial complex $R[X, w]_t$ by
    \begin{align}
        &\{x_0,\dots,x_k \} \in R[X, w]_t\\
        &\hspace{25pt}:\Longleftrightarrow 
        t\geq w(x_i), t\geq w(x_j), \|x_i-x_j\| \le 2t-w(x_i)-w(x_j)~
        \text{for any $i,j \in \{0,\dots,k\}$}.
    \end{align}
    The family $(R[X, w]_t)_{t \in \Real}$ forms a filtration, which is called the \emph{weighted Rips filtration} of $X$. 
\end{definition}
DTM filtration, which we mentioned in \S\ref{subsec:persistent_homology}, is one type of weighted filtration with the weight function $w\colon\Real^m\to\Real$ defined by 
\begin{align}
    w(x)=\left(\frac{1}{k_0}\sum_{k=1}^{k_0}\|x-p_k(x)\|^q\right)^{\frac1q}, 
    \label{eq:dtm_function}
\end{align}
where $p_1(x), \ldots, p_{k_0}(x)$ are choice of $k_0$-nearest neighbors of $x$ in $\Real^m$.

\section{Proofs of the theorem in Section~\ref{sec:approximation_theory}}\label{sec:approximation_proof}
Recall that we define the space of point clouds $\mathcal X$ with
\begin{align*}
    \mathcal X\coloneqq\{X\subset[0, 1]^m\mid |X|\leq N\},
\end{align*}
where $N \in \mathbb{N}$ is a fixed natural number.
Firstly, we prove the compactness of $\mathcal X$.
\begin{lemma}\label{prop:compactX}
    The space $\mathcal X$ is compact.
\end{lemma}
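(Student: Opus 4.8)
The plan is to exhibit $\mathcal X$ (restricted to nonempty point clouds) as the continuous image of a compact space, after which compactness is immediate from the standard topological fact that continuous images of compact spaces are compact. First I would equip the $N$-fold product $([0,1]^m)^N$ with the maximum metric; being a finite product of the compact cube $[0,1]^m$, it is itself compact. I would then define the map $q\colon ([0,1]^m)^N \to \mathcal X$ sending an ordered tuple $(p_1,\dots,p_N)$ to its underlying set $\{p_1,\dots,p_N\}$. This map is surjective onto the nonempty elements of $\mathcal X$: any $X\in\mathcal X$ with $|X|=k\le N$ can be written as $\{p_1,\dots,p_N\}$ simply by listing its $k$ points and repeating the last one $N-k$ times. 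Note that collisions are harmless, since repeating points only decreases cardinality and the constraint $|X|\le N$ is preserved.

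The key step is to verify that $q$ is continuous with respect to the Hausdorff metric $d_H$ on the target. For this I would prove the elementary Lipschitz bound
\[
    d_H(\{p_1,\dots,p_N\},\{p_1',\dots,p_N'\}) \le \max_{1\le i\le N}\|p_i-p_i'\|.
\]
The argument is short: for each $p_i$ in the first set, its nearest neighbour in the second set is at distance at most $\|p_i-p_i'\|\le \max_j\|p_j-p_j'\|$, which bounds the first one-sided Hausdorff distance; the symmetric computation starting from each $p_i'$ bounds the other. Taking the maximum of the two gives the stated inequality, so $q$ is $1$-Lipschitz from the maximum metric to $d_H$, hence continuous.

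Finally, I would conclude that $\mathcal X = q\bigl(([0,1]^m)^N\bigr)$ is compact as the continuous image of the compact space $([0,1]^m)^N$. The main obstacle is essentially only bookkeeping rather than a genuine difficulty: one must carefully justify the padding of short tuples to length $N$ and the Hausdorff-distance bound above, both of which are routine. The one subtlety worth flagging is the empty set: if the definition of $\mathcal X$ is read as allowing $|X|=0$, the Hausdorff distance is not finite-valued at $\emptyset$, so I would restrict attention to nonempty point clouds (equivalently $1\le|X|\le N$), which is the only case relevant to the weighted filtration construction. An alternative, fully self-contained route avoids $q$ entirely: given a sequence in $\mathcal X$, write each member as an $N$-tuple with repetitions, extract a coordinatewise convergent subsequence by compactness of $([0,1]^m)^N$, and use the same Hausdorff bound to show the corresponding sets converge to the limiting set in $\mathcal X$.
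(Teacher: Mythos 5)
Your proof is correct, but it takes a genuinely different route from the paper's. The paper argues by induction on $N$ together with sequential compactness: given a sequence in $\mathcal X$, it passes to a subsequence of constant cardinality, splits off the lexicographically smallest point of each member, extracts convergent subsequences of the points and of the residual sets (the latter by the inductive hypothesis), and recombines them using the estimate $d_H(X\cup\{x_0\},\,Y\cup\{y_0\})\le\max\{\|x_0-y_0\|,\,d_H(X,Y)\}$. Your argument instead presents $\mathcal X$ (minus the empty set) as the image of the compact product $([0,1]^m)^N$ under the tuple-to-set map $q$, shows $q$ is $1$-Lipschitz via the bound $d_H(\{p_i\},\{p_i'\})\le\max_i\|p_i-p_i'\|$ --- which is essentially the $N$-fold iterate of the paper's one-point estimate --- and invokes the fact that continuous images of compact spaces are compact. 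Your route is shorter and avoids both the induction and the delicate diagonal subsequence extraction; it also yields a reusable byproduct, namely the Lipschitz continuity of the unordering map, and your padding trick (repeating a point to reach length $N$) cleanly replaces the paper's case analysis on cardinality. You are also right to flag the empty set, which the paper's definition of $\mathcal X$ formally admits but whose proof implicitly sidesteps (the chosen cardinality $i$ ranges over $\{1,\dots,N_0\}$); with the usual convention $\emptyset$ sits at infinite Hausdorff distance from every nonempty set, so it is an isolated point and appending it to your compact image preserves compactness, making either of your two resolutions acceptable. The main thing the paper's inductive formulation buys is self-containedness at the level of sequences, but your argument is the more standard and economical one.
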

\begin{proof}
    We prove this by induction on $N$. 
    When $N=0$, $\mathcal X$ is a singleton set, so the statement holds.
    Assume that the statement holds for all $N\leq N_0-1$~$(N_0\geq1)$. We prove that $\mathcal X$ is sequentially compact for $N=N_0$.
    
    Take an arbitrary sequence $(X_n)_{n\in\mathbb N}$ in $\mathcal X$.
    Then, we can choose $i\in\{1, \ldots, N_0\}$ 
    so that there exist an infinite number of $n\in\mathbb N$ such that $|X_n|=i$.
    Let $n(1), n(2), \ldots$ be the sequence of $n$ such that $|X_n|=i$, listed in increasing order.
    If $i<N_0$, by the assumption of induction, $(X_{n(k)})_{k\in\mathbb N}$ has a convergent subsequence, which means $(X_n)_{n\in\mathbb N}$ also has a convergent subsequence.
    
    Suppose $i=N_0$. 
    For each $k$, let $x_k$ be the element of $X_{n(k)}$ that is the smallest in the lexicographic order, and let $Y_k\coloneqq X_{n(k)}\setminus\{x_k\}$. 
    By the compactness of $[0,1]^m$, the sequence $(x_k)_{k\in\mathbb N}$ has a convergent subsequence, which we denote by $(x_{k(l)})_{l\in\mathbb N}$. 
    By the assumption of induction, the sequence $(Y_{k(l)})_{l\in\mathbb N}$ also has a convergent subsequence, which we denote by $(Y_{k'(l)})_{l\in\mathbb N}$. 
    Then, the sequence $(x_{k'(l)})_{l\in\mathbb N}$ is also convergent. Let $x^\ast$ and $Y^\ast$ be the limits of $(x_{k'(l)})_{l\in\mathbb N}$ and $(Y_{k'(l)})_{l\in\mathbb N}$, respectively.

    Take an arbitrary $\epsilon>0$. Then we can choose $K_1, K_2\in\mathbb{N}$ such that
    \begin{align}
        l>K_1&\Rightarrow\|x_{k'(l)}-x^\ast\|<\epsilon\\
        l>K_2&\Rightarrow d_H(Y_{k'(l)}, Y^\ast)<\epsilon.
    \end{align}
    Here, for any $X, Y\subseteq\mathbb{R}^m$ and $x_0, y_0\in\mathbb{R}^m$, we have
    \begin{align*}
    &d_H(X\cup\{x_0\}, Y\cup\{y_0\})\\
    &\hspace{40pt}=\max\{
    d(x_0, Y\cup\{y_0\}), 
    \sup_{x\in X} d(x, Y\cup\{y_0\}), 
    d(y_0, X\cup\{x_0\}), 
    \sup_{y\in Y} d(y, X\cup\{x_0\})
    \}\\
    &\hspace{40pt}\leq \max\{
    \|x_0-y_0\|, \sup_{x\in X}d(x, Y), 
    \|y_0-x_0\|, \sup_{y\in Y}d(y, X)
    \}\\
    &\hspace{40pt}\leq \max\{\|x_0-y_0\|, d_H(X, Y)\}
    \end{align*}
    Therefore, for $K=\max\{K_1, K_2\}$ and $l>K$, we have
    \begin{align}
        d_H(Y_{k'(l)}\cup\{x_{k'(l)}\}, Y^\ast\cup\{x^\ast\})
        \leq\max\{d_H(Y_{k'(l)}, Y^\ast), \|x_{k'(l)}-x^\ast\|\}<\epsilon, 
    \end{align} 
    which implies that $(X_{n(k'(l))})_{l\in\mathbb{N}}=(Y_{k'(l)}\cup\{x_{k'(l)}\})_{l\in\mathbb{N}}$ converges to $Y^\ast\cup\{x^\ast\}$.
    Therefore, $(X_n)_{n\in\mathbb{N}}$ has a convergent subsequence. 
    Thus, the claim is proven.
\end{proof}
In the following, assume that $L^2([0, 1]^m)$ is the space of square-integrable functions defined on $[0, 1]^m$. 

Next, we show that any functions in a compact subspace of $L^2([0, 1]^m)$ can be approximated by the linear combination of  finite number of functions in $L^2([0, 1]^m)$.
\begin{lemma}\label{prop:FiniteVectorApprox}
    Let $\mathcal G$ be a compact subspace of $L^2([0, 1]^m)$.
    Then, for any $\epsilon>0$, there exist $K\in\mathbb N$ and 
    $e_1, \ldots, e_K\in \mathcal G$ such that 
    $\|e_1\|_{L_2}=\cdots=\|e_K\|_{L_2}=1$ and 
    \begin{align}
        \sup_{g\in\mathcal G}\int_{[0, 1]^m}
        \left(
            g(x)-\sum_{k=1}^K \langle g, e_k \rangle \cdot e_k(x)
        \right)
        \diff x<\epsilon.
    \end{align}
\end{lemma}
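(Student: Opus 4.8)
The plan is to exploit the compactness of $\mathcal{G}$ through total boundedness together with the best-approximation property of orthogonal projection in the Hilbert space $L^2([0,1]^m)$. Throughout I use the standard inner product $\langle g, h\rangle = \int_{[0,1]^m} g(x) h(x)\,\diff x$, and I note that since $[0,1]^m$ has unit Lebesgue measure the constant function $\mathbf{1}$ satisfies $\|\mathbf{1}\|_{L^2} = 1$. The target reduces to producing a finite orthonormal system $e_1, \dots, e_K$ whose associated projection uniformly approximates $\mathcal{G}$: if $\sup_{g \in \mathcal{G}} \|g - \sum_{k=1}^K \langle g, e_k\rangle e_k\|_{L^2} < \delta$, then Cauchy--Schwarz against $\mathbf{1}$ gives $\int_{[0,1]^m}(g - \sum_{k=1}^K \langle g, e_k\rangle e_k)\,\diff x \le \|g - \sum_{k=1}^K \langle g, e_k\rangle e_k\|_{L^2} < \delta$, so it suffices to make $\delta$ small (the same bound controls the squared integrand if that is the quantity intended).

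First I would invoke compactness: a compact subset of the complete space $L^2([0,1]^m)$ is totally bounded, so for a $\delta>0$ to be fixed later there is a finite $\delta$-net $g_1, \dots, g_n \in \mathcal{G}$ with $\mathcal{G} \subseteq \bigcup_i B(g_i, \delta)$. Let $V \coloneqq \mathrm{span}\{g_1, \dots, g_n\}$, of dimension $K \le n$, and let $P_V$ denote orthogonal projection onto $V$. For arbitrary $g \in \mathcal{G}$, choose a net element $g_i$ with $\|g - g_i\|_{L^2} < \delta$; since $g_i \in V$ and $P_V g$ is by definition the closest point of $V$ to $g$, we get $\|g - P_V g\|_{L^2} \le \|g - g_i\|_{L^2} < \delta$. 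The crucial point is that this bound is \emph{uniform} over the infinite set $\mathcal{G}$, obtained without controlling each $g$ separately, precisely because projection automatically does at least as well as the nearest net point.

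Next I would apply Gram--Schmidt to $g_1, \dots, g_n$ to extract an orthonormal basis $e_1, \dots, e_K$ of $V$, so that $\|e_k\|_{L^2} = 1$ and $P_V g = \sum_{k=1}^K \langle g, e_k\rangle e_k$. Combining this with the previous step yields $\sup_{g \in \mathcal{G}} \|g - \sum_{k=1}^K \langle g, e_k\rangle e_k\|_{L^2} < \delta$, and choosing $\delta = \epsilon$ (or $\delta = \sqrt{\epsilon}$ in the squared reading) completes the estimate.

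The step needing the most care is the requirement that the $e_k$ be unit vectors associated with $\mathcal{G}$. The Gram--Schmidt vectors are finite linear combinations of the net elements $g_i \in \mathcal{G}$; they are unit vectors and lie in $\mathrm{span}\,\mathcal{G}$, but not literally in $\mathcal{G}$ itself. This is harmless for the intended use in Theorem~\ref{prop:approximation}: there $\mathcal{G} = \{f(X,\cdot) : X \in \mathcal{X}\}$ consists of continuous functions, so each $e_k$, being a finite linear combination of such functions, is again continuous --- which is exactly what is needed to make the reconstruction map $\psi_2(c,x) = \sum_{k=1}^K c_k e_k(x)$ continuous while the coefficient map $\psi_1(X) = (\langle f(X,\cdot), e_k\rangle)_{k=1}^K$ stays continuous in $X$. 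Hence I would read the membership claim as ``$e_k$ in the span of finitely many elements of $\mathcal{G}$''; insisting on literal membership $e_k \in \mathcal{G}$ would force abandoning orthonormality (for instance by merely normalizing net points), after which $\sum_{k=1}^K \langle g, e_k\rangle e_k$ is no longer the orthogonal projection and the clean error bound is lost. The remaining verifications --- that total boundedness supplies the finite net and that $P_V$ realizes the best approximation from $V$ --- are standard Hilbert-space facts and require no computation.
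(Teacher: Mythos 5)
Your proof is correct and follows essentially the same route as the paper's: a finite $\epsilon$-net supplied by compactness, an orthonormal basis of its span, and the best-approximation property of orthogonal projection giving the uniform bound $\|g-\sum_{k=1}^K\langle g,e_k\rangle e_k\|_{L^2}\leq\min_i\|g-g_i\|_{L^2}<\epsilon$. Your two caveats --- that the $e_k$ lie only in the span of elements of $\mathcal{G}$ rather than in $\mathcal{G}$ itself, and that the integrand in the display should be squared to match the lemma's use in Theorem~\ref{prop:approximation} --- identify imprecisions in the paper's statement that the paper's own proof shares, since it too takes the $e_k$ merely as an orthonormal basis of $\mathrm{span}\{g_1,\ldots,g_{K'}\}$.
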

\begin{proof}
Take $\epsilon>0$ arbitrarily.
Since $\mathcal G$ is compact, there exists $K'\in\mathbb N$ such that 
$g_1, \ldots, g_{K'}$ such that
$\min_{i=1, \ldots, K'} \|g - g_i\|_{L_2} < \epsilon$ holds for any $g\in\mathcal G$.
Let $\{e_1, \ldots, e_K\}~(K\in\mathbb N, K\leq K')$ be a orthonormal basis of $\mathrm{span}\{g_1, \ldots, g_{K'}\}$. 
Since $\sum_{k=1}^K \langle g, e_k \rangle \cdot e_k(x)$ is a projection of $g\in L_2([0, 1]^m)$ on $\mathrm{span}\{g_1, \ldots, g_{K'}\}$, for any $g\in\mathcal G$, we have
\begin{align*}
    \left\|g-\sum_{k=1}^K \langle g, e_k \rangle \cdot e_k\right\|_{L_2}
    \leq \min_{i=1, \ldots, K'}\|g-g_i\|_{L_2}<\epsilon.
\end{align*}
This completes the proof. 
\end{proof}

The following two lemmas show that the continuity of the map $X\mapsto f(X, \cdot)$, which is proved using the continuity of the map $(X, x)\mapsto f(X, x)$.
\begin{lemma}\label{prop:XtofIsContinuousSup}
    Let $f:\mathcal{X}\times[0,1]^m\to\mathbb{R}$ be a continuous function and let $X_0\in\mathcal{X}$. For any $\epsilon>0$, there exists $\delta>0$ such that
    \begin{align}
        d_H(X_0, X)<\delta\Rightarrow
        \sup_{x\in[0, 1]^m}
        \left|
        f(X, x)-f(X_0, x)
        \right|<\epsilon
    \end{align}
    holds.
\end{lemma}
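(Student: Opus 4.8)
The plan is to invoke uniform continuity. The domain $\mathcal{X}\times[0,1]^m$ is compact: $\mathcal{X}$ is compact by Lemma~\ref{prop:compactX} and $[0,1]^m$ is compact, so their product is compact. I would equip the product with the metric $d((X,x),(X',x'))\coloneqq\max\{d_H(X,X'),\|x-x'\|\}$, which induces the product topology and hence makes $f$ continuous with respect to $d$. By the Heine--Cantor theorem, a continuous real-valued function on a compact metric space is uniformly continuous, so $f$ is uniformly continuous on $\mathcal{X}\times[0,1]^m$.

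Given $\epsilon>0$, uniform continuity yields $\delta>0$ such that $d((X,x),(X',x'))<\delta$ implies $|f(X,x)-f(X',x')|<\epsilon/2$. I claim this same $\delta$ works for the statement. Indeed, suppose $d_H(X_0,X)<\delta$ and fix any $x\in[0,1]^m$. The distance in the product between $(X,x)$ and $(X_0,x)$ equals $\max\{d_H(X,X_0),0\}=d_H(X,X_0)<\delta$, so $|f(X,x)-f(X_0,x)|<\epsilon/2$. Since $x$ was arbitrary, taking the supremum gives $\sup_{x\in[0,1]^m}|f(X,x)-f(X_0,x)|\le\epsilon/2<\epsilon$, which is the desired conclusion. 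Conceptually, the argument is just the observation that pointwise continuity in $X$, made uniform over the compact parameter set $[0,1]^m$, is exactly what uniform continuity on the compact product delivers.

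The only genuine ingredient is compactness of the product domain, which rests on Lemma~\ref{prop:compactX} together with compactness of $[0,1]^m$; everything else is the routine reduction of the product metric when the $[0,1]^m$-coordinate is held fixed. I do not expect a substantial obstacle here. The one point worth stating carefully is that the hypothesis provides \emph{joint} continuity of $f$ on $\mathcal{X}\times[0,1]^m$, which is precisely what makes $f$ uniformly continuous on the compact product; mere separate continuity in each variable would not be enough for this reduction.
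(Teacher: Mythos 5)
Your proof is correct, but it takes a genuinely different route from the paper's. The paper never uses compactness of $\mathcal{X}$ in this lemma: it fixes $X_0$, covers the compact slice $\{X_0\}\times[0,1]^m$ by finitely many balls on which $f$ oscillates by less than $\epsilon/2$, and extracts a single $\delta$ from the finite subcover---essentially a hand-rolled Lebesgue-number argument giving continuity in $X$ at $X_0$, uniformly over $x$. You instead invoke Lemma~\ref{prop:compactX} to get compactness of the full product $\mathcal{X}\times[0,1]^m$ and then apply Heine--Cantor. The trade-offs: your argument is shorter and proves a strictly stronger statement, namely a $\delta$ independent of $X_0$ (uniform continuity of $X\mapsto f(X,\cdot)$ in the sup norm); this uniform version is in fact what the paper later needs in the proof of Theorem~\ref{prop:final-theorem}, where it is re-derived there from continuity plus compactness of $\mathcal{X}$. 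The paper's local argument, by contrast, needs only compactness of $[0,1]^m$, so the lemma as stated would survive even without the cardinality bound $|X|\le N$ that makes $\mathcal{X}$ compact---it isolates exactly which hypothesis does the work at this step, at the cost of a longer covering argument. Two details you handled carefully and correctly: you noted that \emph{joint} continuity (not separate continuity) is what Heine--Cantor requires, and you built in the $\epsilon/2$ slack so that passing to the supremum still yields a bound strictly below $\epsilon$.
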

\begin{proof}
    Since $[0,1]^m$ is compact, $\{X_0\}\times[0,1]^m$ is also compact in the product topology. 
    Note that the product topology is equivalent to the topology induced by the metric $d_\infty((X, x), (Y, y))\coloneqq\max\{d_H(X,Y), \|x-y\|_2\}$.

    Take an arbitrary $\epsilon>0$. Since $f$ is continuous, for any $x\in[0,1]^m$, we can choose $\delta_x>0$ such that
    \begin{align*}
        y\in[0, 1]^m, d_\infty((X_0, y) ,(X_0,x))<\delta_x\Rightarrow 
        \left|f(X_0, y)-f(X_0, x)\right|<\epsilon/2.
    \end{align*} 
    Let $\bigcup_x B_{\delta_x}(x)$ be an open cover of $\{X_0\}\times[0,1]^m$. 
    Then, by the compactness of $\{X_0\}\times[0,1]^m$, 
    there exist $n\in\mathbb{N}$ and $x_1,\ldots,x_n\in[0,1]^m$ 
    such that $\bigcup_i B_{\delta_{x_i}}(x_i)$ is also an open cover of $\{X_0\}\times[0,1]^m$.

    Let $\delta\coloneqq\min\{\delta_1,\ldots,\delta_n\}/2$. 
    Suppose $X\in\mathcal{X}$ satisfies $d_H(X_0,X)<\delta$, 
    and let $x$ be an arbitrary point in $[0,1]^m$. 
    Then, there exists $i\in\{1,\ldots,n\}$ such that
    \begin{align*}
        \|x-x_i\|_2=d_\infty((X_0, x), (X_0, x_i))<\delta_i
    \end{align*}
    Moreover, we have
    \begin{align}
        d_\infty((X, x), (X_0, x_i))
        =\max\{d_H(X, X_0), \|x-x_i\|_2\}<\delta_i
    \end{align}
    which implies that
    \begin{align}
        |f(X_0, x)-f(X_0, x_i)|<\epsilon/2, |f(X, x)-f(X_0, x_i)|<\epsilon/2.
    \end{align}
    Therefore, we obtain $|f(X, x)-f(X_0, x)|<\epsilon$. 
    Since $\delta$ does not depend on $x$, the claim is proved.
\end{proof}
\begin{lemma}\label{prop:XtofIsContinuousL2}
    Let $f\colon\mathcal X\times[0, 1]^m\to\Real$ is a continuous function. 
    Then, the map $h\colon\mathcal X\to L^2([0, 1]^m), X\mapsto f(X, \cdot)$ is continuous.
\end{lemma}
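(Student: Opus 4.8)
The plan is to deduce $L^2$-continuity directly from the uniform (sup-norm) estimate already established in Lemma~\ref{prop:XtofIsContinuousSup}. The key observation is that the domain $[0,1]^m$ carries unit Lebesgue measure, so that the $L^2$-norm of any bounded function is dominated by its sup-norm; hence the uniform control of $f(X,\cdot) - f(X_0,\cdot)$ supplied by the previous lemma immediately translates into control of $\|h(X) - h(X_0)\|_{L^2}$.

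Concretely, I would fix $X_0 \in \mathcal X$ and $\epsilon > 0$ and argue as follows. First I invoke Lemma~\ref{prop:XtofIsContinuousSup} with the same $\epsilon$ to obtain $\delta > 0$ such that $d_H(X_0, X) < \delta$ forces $\sup_{x\in[0,1]^m} |f(X,x) - f(X_0,x)| < \epsilon$. Then I estimate the squared $L^2$-distance by the sup-norm:
\begin{align*}
    \|h(X) - h(X_0)\|_{L^2}^2
    = \int_{[0,1]^m} \bigl(f(X,x) - f(X_0,x)\bigr)^2 \, \diff x
    \leq \sup_{x\in[0,1]^m} |f(X,x) - f(X_0,x)|^2 < \epsilon^2,
\end{align*}
where the middle inequality uses $\int_{[0,1]^m} \diff x = 1$. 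Taking square roots gives $\|h(X) - h(X_0)\|_{L^2} < \epsilon$ whenever $d_H(X_0,X) < \delta$, which is precisely continuity of $h$ at $X_0$. Since $X_0 \in \mathcal X$ was arbitrary, $h$ is continuous on all of $\mathcal X$.

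I do not expect a genuine obstacle at this stage: the substantive work, namely the uniform control of the oscillation of $f$ in its second argument under small Hausdorff perturbations of the point cloud (which itself rested on the compactness of $\{X_0\}\times[0,1]^m$), has already been carried out in Lemma~\ref{prop:XtofIsContinuousSup}. The only point worth flagging is the passage from the sup-norm to the $L^2$-norm, which is harmless here precisely because the unit cube has finite measure; on a domain of infinite measure this implication would fail and a separate argument would be required.
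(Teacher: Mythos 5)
Your proof is correct and follows essentially the same route as the paper: both invoke Lemma~\ref{prop:XtofIsContinuousSup} to get uniform control and then bound the $L^2$-norm by the sup-norm using the fact that $[0,1]^m$ has unit Lebesgue measure. Your remark about why the sup-to-$L^2$ passage would fail on a domain of infinite measure is a sensible clarification, but the argument itself matches the paper's.
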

\begin{proof}
Take $\epsilon>0$ and $X_0\in\mathcal X_0$ arbitrarily.
By Lemma \ref{prop:XtofIsContinuousSup}, there exists $\delta>0$ such that 
\begin{align}
    d_H(X, X_0)<\delta\Rightarrow \sup_{x\in[0,1]^m}|f(X, x)-f(X_0, x)|<\epsilon.
\end{align}
Therefore, if $d_H(X, X_0)<\delta$, we have
\begin{align}
    \|f(X, \cdot)-f(X_0, \cdot)\|_{L_2}=
    \sqrt{
        \int_{[0, 1]^m} \left(
            f(X, x)-f(X_0, x)
        \right)^2 \diff x
    }< \epsilon, 
\end{align}
which implies the map $X\mapsto f(X, \cdot)$ is continuous.
\end{proof}
Thanks to the compactness of $\mathcal X$~(Lemma \ref{prop:compactX}), the finite approximation result~(Lemma \ref{prop:FiniteVectorApprox}) and the continuity of the map $X\mapsto f(X, \cdot)$~(Lemma \ref{prop:XtofIsContinuousL2}), we can approximate $f(X, \cdot)$ with a linear combination of functions in $L^2([0, 1]^m)$, for all $X\in\mathcal X$. 

Finally, we prove the following theorem, which is same as Theorem \ref{prop:approximation}, using the fact that the space of continuous functions defined on $[0, 1]^m$ is dense in $L^2([0, 1]^m)$.
\begin{theorem}\label{prop:final-theorem}
    Let $f \colon \mathcal X \times [0,1]^d \to \Real$ be a continuous function. 
    Then for any $\epsilon > 0$, there exist $K\in\mathbb N$ and continuous maps 
    $\psi_1\colon\mathcal X\to \Real^K, \psi_2\colon\Real^{K+m}\to\Real$ 
    such that
    \begin{align*}
        \sup_{X\in\mathcal{X}}
        \int_{[0, 1]^m}
        \left(
            f(X, x)-\psi_2([\psi_1(X)^\top, x^\top]^\top)
        \right)^2\diff x < \epsilon.
    \end{align*}
\end{theorem}
\begin{proof}
    Since the image of a compact space under a continuous map is compact, by Lemma \ref{prop:compactX} and Lemma \ref{prop:XtofIsContinuousL2}, $\{f(X, \cdot)\mid X\in\mathcal X\}$ is a compact set. 
    By Lemma \ref{prop:FiniteVectorApprox}, there exist $K\in\mathbb N$ and $e_1, \ldots, e_K$ such that
    $\|e_1\|_{L_2}=\cdots=\|e_K\|_{L_2}=1$ and
    \begin{align*}
        \sup_{X\in\mathcal X}\int_{[0, 1]^m}
        \left(
            f(X, x)-\sum_{k=1}^K\langle f(X, \cdot), e_k\rangle \cdot e_k(x)
        \right)^2
        \diff x<\epsilon/4.
    \end{align*}
    By Lemma \ref{prop:XtofIsContinuousL2}, $X\mapsto f(X, \cdot)$ is continuous. 
    Therefore, for any $\tau>0$, there exists $\delta>0$ such that
    $d_H(X, Y)<\delta\Rightarrow \|f(X, \cdot)-f(Y, \cdot)\|<\tau$ for any $X, Y\in\mathcal X$.
    Then, if $X, Y\in\mathcal X$ satisfy $d_H(X, Y)<\delta$, we have
    \begin{align*}
        \left|\langle f(X, \cdot), e_k\rangle - \langle f(Y, \cdot), e_k\rangle\right|
        &=\left|\langle f(X, \cdot) - f(Y, \cdot), e_k\rangle\right|\\
        &\leq \|f(X, \cdot)-f(Y, \cdot)\|_{L_2}\\
        &< \tau.
    \end{align*}
    This means that the function $h\colon \mathcal X\to \Real$ such that $h(X)=\langle f(X, \cdot), e_k\rangle$ is continuous. 
    Since $\mathcal X$ is a compact space, $h(X)$ is bounded, i.e., 
    there exists $B>0$ such that, for any $k$, $|\langle f(X, \cdot), e_k\rangle|<B$ holds.

    Since the space of continuous functions with compact support is dense in $L^2([0, 1]^m)$, 
    there exist continuous functions $\hat e_1, \ldots, \hat e_K$ such that $\|e_k-\hat e_k\|_{L_2}<\sqrt\epsilon/2BK$.

    Let $\psi_1\colon\mathcal X\to \Real^K, 
    \psi_2\colon\Real^{K+m}\to \Real$ be continuous maps such that
    \begin{align*}
        &\psi_1(X)=[\langle f(X, \cdot), e_1\rangle, 
            \ldots,\langle f(X, \cdot), e_K\rangle]^\top\\
        &\psi_2([x_1, \ldots, x_K, y]^\top)=\sum_{k=1}^K x_k\cdot \hat e_k(y)
        ~~~~~(x_1, \ldots, x_k\in\Real, y\in[0, 1]^m).
    \end{align*}
    Then, for any $X\in\mathcal X$, we have
    \begin{align}
        &\sqrt{\int_{[0, 1]^m}
        \left(
            f(X, x)-\sum_{k=1}^K \psi_2([\psi_1(X)^\top, x^\top]^\top)
        \right)^2\diff x}\\
        &\hspace{30pt}
        <\sqrt{\int_{[0, 1]^m}
        \left(
            f(X, x)-\sum_{k=1}^K\langle f(X, \cdot), e_k\rangle \cdot e_k(x)
        \right)^2\diff x}\\
        &\hspace{70pt}
        +\sqrt{\int_{[0, 1]^m}
        \left(
            \sum_{k=1}^K\langle f(X, \cdot), e_k\rangle \cdot e_k(x)-\psi_2([\psi_1(X)^\top, x^\top]^\top)
        \right)^2\diff x}\\
        &\hspace{30pt}
        <\frac{\sqrt{\epsilon}}2 
        +\sqrt{\int_{[0, 1]^m}
        \left(
            \sum_{k=1}^K\langle f(X, \cdot), e_k\rangle \cdot e_k(x)-\sum_{k=1}^K\langle f(X, \cdot), e_k\rangle \cdot \hat e_k(x)
        \right)^2\diff x}\\
        &\hspace{30pt}
        \leq \frac{\sqrt{\epsilon}}2 
        +\sqrt{\int_{[0, 1]^m}
        \left(
            \sum_{k=1}^K\langle f(X, \cdot), e_k\rangle 
            \cdot (e_k(x)-\hat e_k(x))
        \right)^2\diff x}\\
        &\hspace{30pt}
        \leq \frac{\sqrt{\epsilon}}2 
        +\sqrt{\int_{[0, 1]^m} 
        \left(\sum_{k=1}^K \langle f(X, \cdot), e_k\rangle ^2\right)
        \left(\sum_{k=1}^K (e_k(x)-\hat e_k(x))^2\right)
        \diff x}\\
        &\hspace{30pt}
        \leq \frac{\sqrt{\epsilon}}2 
        +\sqrt{
            KB^2\sum_{k=1}^K\int_{[0, 1]^m}(e_k(x)-\hat e_k(x))^2\diff x
        }\\
        &\hspace{30pt}
        < \frac{\sqrt{\epsilon}}2 
        +\sqrt{KB^2\cdot K\left(\frac{\sqrt{\epsilon}}{2KB}\right)^2}
        =\sqrt\epsilon.
    \end{align}
    This completes the proof.
\end{proof}
\section{Details of Experiment Settings}\label{sec:experiment_detail}
\subsection{Evaluation}
In each experiment, 5-fold cross-validation was conducted, and average and standard deviation of the five results are shown in the result section.

\subsection{Optimization}
We set the batch size as 40. 
As an optimization algorithm, we used Adam. 
We used the following learning rate scheduler based on the method proposed in \cite{vaswani2017attention}: 
\begin{align}
    \eta=\eta_{\max}\cdot\min\left\{\frac{1}{\mathrm{epoch}^{\frac12}}, \frac{\mathrm{epoch}}{N_{\mathrm{warmup}}^{\frac32}}\right\}.
\end{align}
We set $N_{\mathrm{warmup}}=40$. Additionally, we set $\eta_{\max}=2.0\times10^{-2}$ when we train DeepSets or PointNet in the 1st phase and $\eta_{\max}=1.0\times10^{-2}$ in other training~(PointMLP in the 1st phase, and all in the 2nd phase).
For the protein dataset, the number of epochs was $200$. 
For ModelNet10, the number of epochs in the 1st phase was $1000$, and that of the 2nd phase was $200$.

\subsection{Settings related to persistent homology}\label{sec:setting_ph}

For PersLay, the dimension of the parameter $c$ was $2\times32$, i.e., $c\in\Real^{2\times 32}$.
Additionally, we set all of the permutation invariant operator $\mathbf{op}$ which appears in PersLay as summation.
The dimension of the vectors computed by PersLay is $32$. 
We convert this vector to the feature vector whose dimension is 16 using a linear model. 

For the weighted function in the DTM filtration (see Equation~\eqref{eq:dtm_function}), we fixed the value of $q$ at $2$, and we chose the hyperparameter $k_0$ from among 2, 3, 5, 8, and 10 to maximize the classification accuracy through cross-validation. 
See Appendix \ref{sec:dtm_hyperparameters} for the results with other values of $k_0$. 

\subsection{Networks}
We set all of the permutation invariant operator $\mathbf{op}$ which appears in our method and DeepSets as summation.
The dimension of the feature vectors obtained by PersLay, DeepSets, PointNet, PointMLP was set as 16. 

Next, we describe the architectures of DNN-based methods. 
In the following, we denote multi-layer parceptrons whose dimension of input is $d_{in}$, that of output is $d_{out}$ and the number of neurons in the middle layers $d_1, d_2, \ldots$ by $\mathrm{MLP}(d_{in}, d_1, d_2, \ldots, d_{out})$.
Batch normalization was performed after each layer. 
Regarding the proposed method, 
we set $\phi^{(1)}=\mathrm{MLP}(1, 64, 128, 256)$, 
$\phi^{(2)}=\mathrm{MLP}(256, 128, 64, 8)$, 
$\phi^{(3)}=\mathrm{MLP}(1, 64, 128, 256)$, 
$\phi^{(4)}=\mathrm{MLP}(256, 256, 256, 256)$, 
$\phi^{(5)}=\mathrm{MLP}(256, 128, 64, 16)$
and $\phi^{(6)}=\mathrm{MLP}(24, 256, 512, 256, 1)$.
Dropout was performed for the last two layers of $\phi^{(2)}$ and $\phi^{(5)}$.
As for DeepSets, which is defined by $\mathrm{DeepSets}(X) \coloneqq\phi_2(\mathbf{op}(\{\phi_1(x_i)\}_{i=1}^N\}))$, we set 
$\phi_1=\mathrm{MLP}(\mathtt{input\_dim}, 64, 64, 64, 128, 1024)$ and
$\phi_2=\mathrm{MLP}(1024, 512, 256, 16)$. 
Dropout was performed for the last two layers of $\phi_2$. 
For PointNet, we used the same architecture as that of used in \citet{qi2017pointnet}, except for the dimension of the output, which we set 16. 
Regarding PointMLP, we set $\mathtt{dim\_expansion}=[2, 2, 2], \mathtt{pre\_blocks}=[2, 2, 2], \mathtt{pos\_blocks}=[2, 2, 2], \mathtt{k\_neighbors}=[24, 24, 24], \mathtt{reducers}=[2, 2, 2]$.

Finally, we describe the initialization of the network parameters. 
In the experiment on protein dataset, we initialized the parameters in the network to compute the weight in our method with normal distribution with a mean of $0$ and a standard deviation of $1.0\times10^{-4}$, while other parameters was initialized with the default settings of PyTorch. 
For the experiment on ModelNet, all of the parameters was initialized with the default settings of PyTorch. 

\subsection{Computational cost of persistent homology} \label{sec:computational_cost}

First, we estimate the computational cost of our experiments. 
The computational complexity of persistent homology is known to be cubic in the number of simplices in the worst case. 
This fact can be found in, for example, the last sentence of Section 5.3.1 in \citet{otter2017roadmap}.
Since we consider 1-dimensional persistent homology, we need to consider up to 2-simplices in the Rips complex, so the number of simplices is 
\begin{equation*}
    \binom{N}{1}+\binom{N}{2}+\binom{N}{3}=\frac16 N(N^2+5), 
\end{equation*}
where $N$ is the number of points. 
For the protein dataset, the number of points is 60, so the number of simplices is 36,050. 
For the 3D CAD dataset, the number of points is 128, so the number of simplices is 349,632.
Since the number of simplices is $O(N^3)$, the computational cost can be upper-bounded by $O(N^9)$ in our case.

\section{More Observation on the Approximation Ability of the Weight Function}

In \S \ref{sec:approximation_theory}, we provided the theoretical result that demonstrates the validity of the concatenation finite dimensional vectors. 
While this result partially justifies our network architecture, 
we did not show that the continuous maps $\psi_1$ and $\psi_2$ can be approximated by the networks.
Instead of providing further theoretical results, we show that our network can recover the DTM function.
We trained our network $f_\theta(X, x)$ for the regression task to approximate DTM functions \eqref{eq:dtm_function}.
We show the approximation error in Table~\ref{table:approximation_experiment}.
We can observe that the approximation error is small enough. 
This result means that our method can choose filtrations from the space including Rips and DTM filtration if trained appropriately.

\begin{table}[ht]
    \centering
    \caption{
    The approximation error of DTM functions by the proposed network $f_\theta(X, x)$ for various values of $k_0$. 
    In this experiment, the hyperparameter $q$ of DTM function was fixed at 2.
    We can observe that DTM function can be approximated by our network $f_\theta(X, x)$ with small error. 
    }
    \vspace{2mm}
    \label{table:approximation_experiment}
    \begin{tabular}{cc}
        \hline
        value of $k_0$ & error  \\ \hline\hline
        0 (Rips)   & 0.0000 ± 0.0000 \\
        2       & 0.0015 ± 0.0000 \\
        3      & 0.0016 ± 0.0000 \\
        4     & 0.0017 ± 0.0000 \\
        5     & 0.0018 ± 0.0000 \\
        10    & 0.0022 ± 0.0001 \\ \hline
    \end{tabular}
\end{table}

We leave it as future work to provide stronger theoretical results for our network or to propose a new architecture with a complete approximation guarantee.
There are several previous studies that could help solve this task. For example, the study by \cite{boutin2004reconstructing} shows that in most cases, the arrangement of points can be uniquely determined from the distribution of distances between pairs of points.
Additionally, \cite{widdowson2023recognizing} showed that Simplexwise Centered Distribution (SCD) is complete isometry invariant and continuous. 
These results would help us to obtain theoretical guarantees of our network or to propose a new network architecture with stronger theoretical guarantees.

\section{More Observations on Numerical Experiments}

\subsection{Visualization of weight function and the difference of persistent diagrams}

In this section, we visualize some of the weight functions that are learned by the proposed method. 
Additionally, we draw the persistence diagrams for Rips filtration and the learned filtration.

For the protein dataset, we visualize the resulting weight function in Figure \ref{fig:protein_weight_visualization}, and show the persistence diagram with/without the weight function in Figure \ref{fig:protein_pd_difference}.
We can see that our method gives different weights depending on the points and the point clouds. 
However, we cannot observe large differences between the persistence diagrams. 
Developing the methods that can learn much interpretable weight function is our future work. 

For the 3D CAD dataset, we show the weight functions and the persistence diagram in Figure \ref{fig:modelnet_weight_visualization} and \ref{fig:modelnet_pd_difference}, respectively. 
We can observe that the middle and the rightmost point clouds have non-trivial weight functions, which may lead to improving the classification accuracy. 
On the other hand, we can observe that the leftmoft point cloud has a trivial weight function, which means that our method does not have any advantages in this case.

\begin{figure}[ht]
  \centering
  \includegraphics[width=13cm]{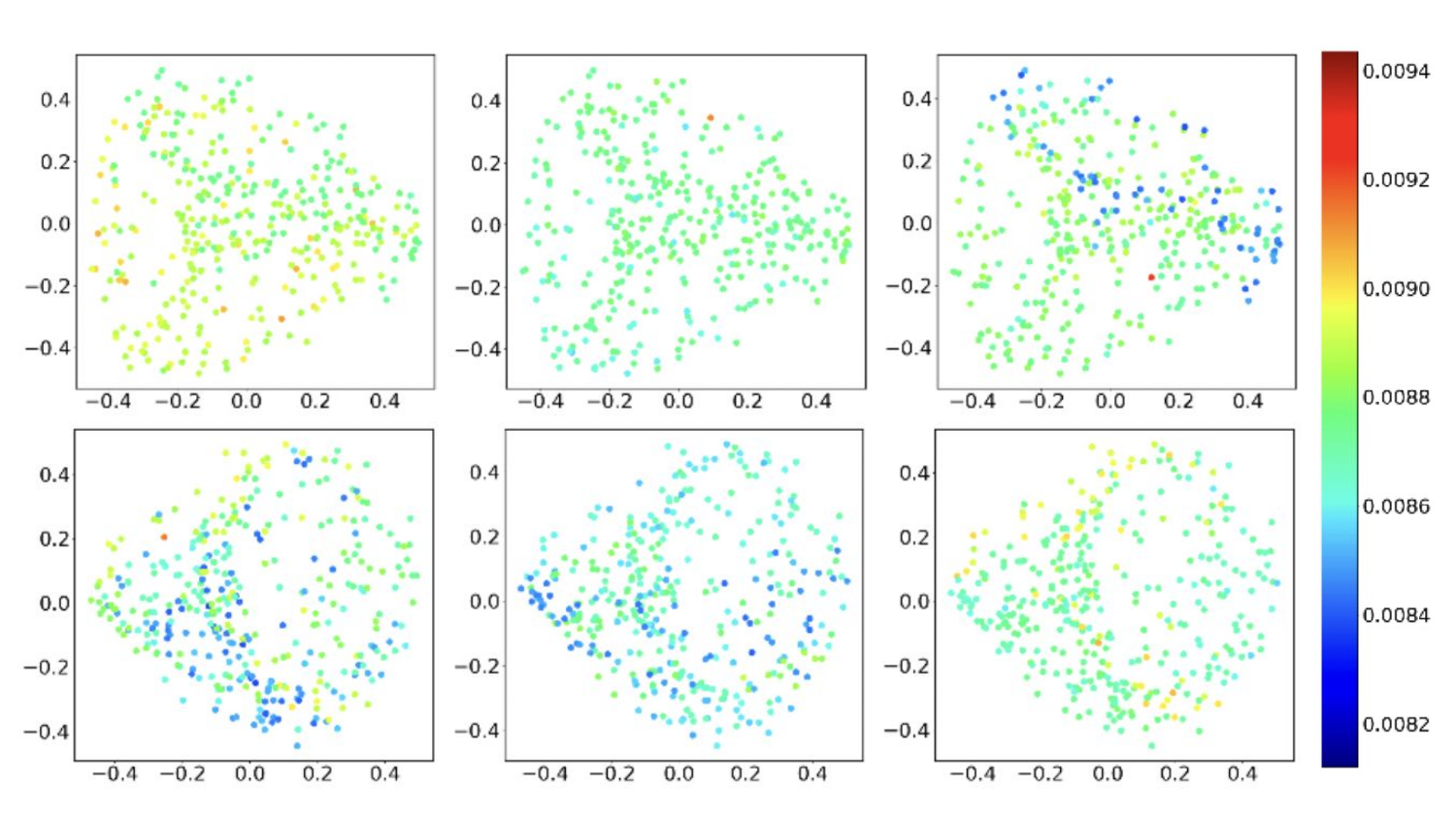}
  \caption{
  Some examples of the weight functions in the experiment for the protein dataset.
  These six point clouds correspond to different proteins.
  The point clouds in the first row belongs to \emph{open}, and those in the second row belongs to \emph{close}.
  Since the original dataset are given by the distance matrices, we visualize the point clouds using the method called multi-dimensional scaling.
  We visualize the weights for all of the 370 points, which include the 60 points used for training.
  We can see that the weight depends on each point.
  Additionally, we can observe that different weight functions are learned for different point clouds.
  }
  \label{fig:protein_weight_visualization}
\end{figure}
\begin{figure}[ht]
  \centering
  \includegraphics[width=0.9\linewidth]{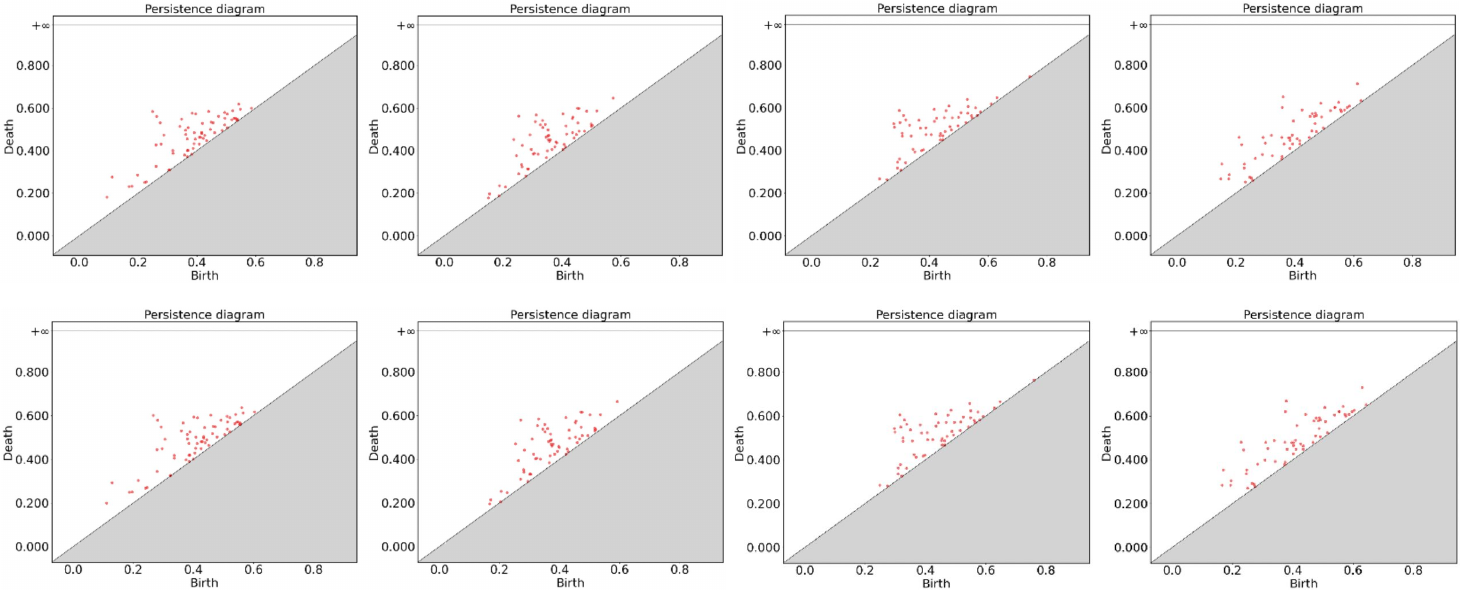}
  \caption{
  The persistence diagrams for the Rips filtration and the weighted filtration learned by our method (protein dataset).
  The first row corresponds to the Rips filtration, and the second row corresponds to the weighted filtration. 
  The persistence diagrams in the first and second columns are associated with the point clouds with the label \emph{open}, and those in the third and fourth columns are associated with the point clouds with the label \emph{close}.
  }
  \label{fig:protein_pd_difference}
\end{figure}

\begin{figure}[ht]
  \centering
  \includegraphics[width=0.9\linewidth]{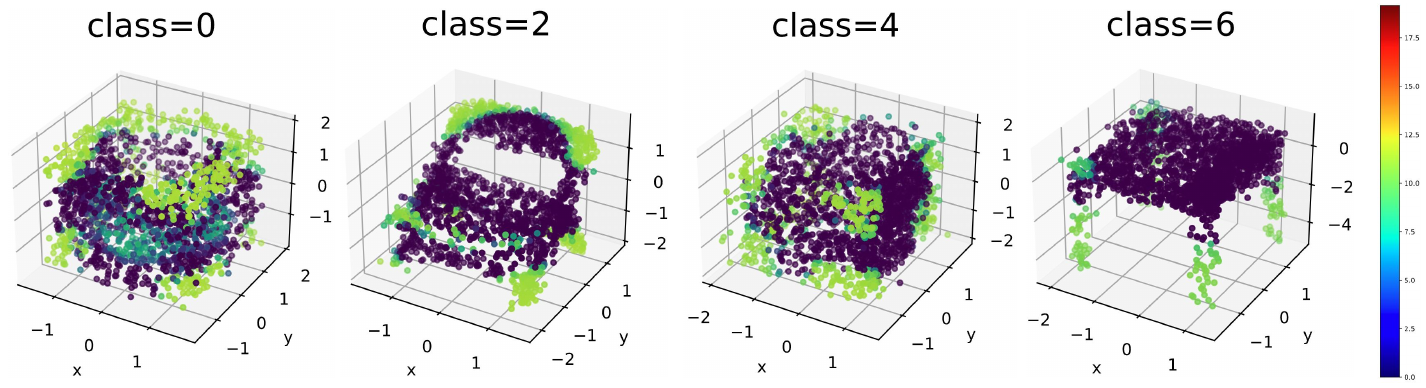}
  \caption{
  Some examples of the weight functions in the experiment for the 3D CAD dataset. 
  Different point clouds in this figure represent different labels.
  We visualize the weights for 1920 points, which include the 128 points used for training.
  We can see that our method assigns small weight values to points near the edges of the objects and points corresponding to the legs of the desks.
  This would help improving the classification accuracy. 
  }
  \label{fig:modelnet_weight_visualization}
\end{figure}

\begin{figure}[ht]
  \centering
  \includegraphics[width=0.9\linewidth]{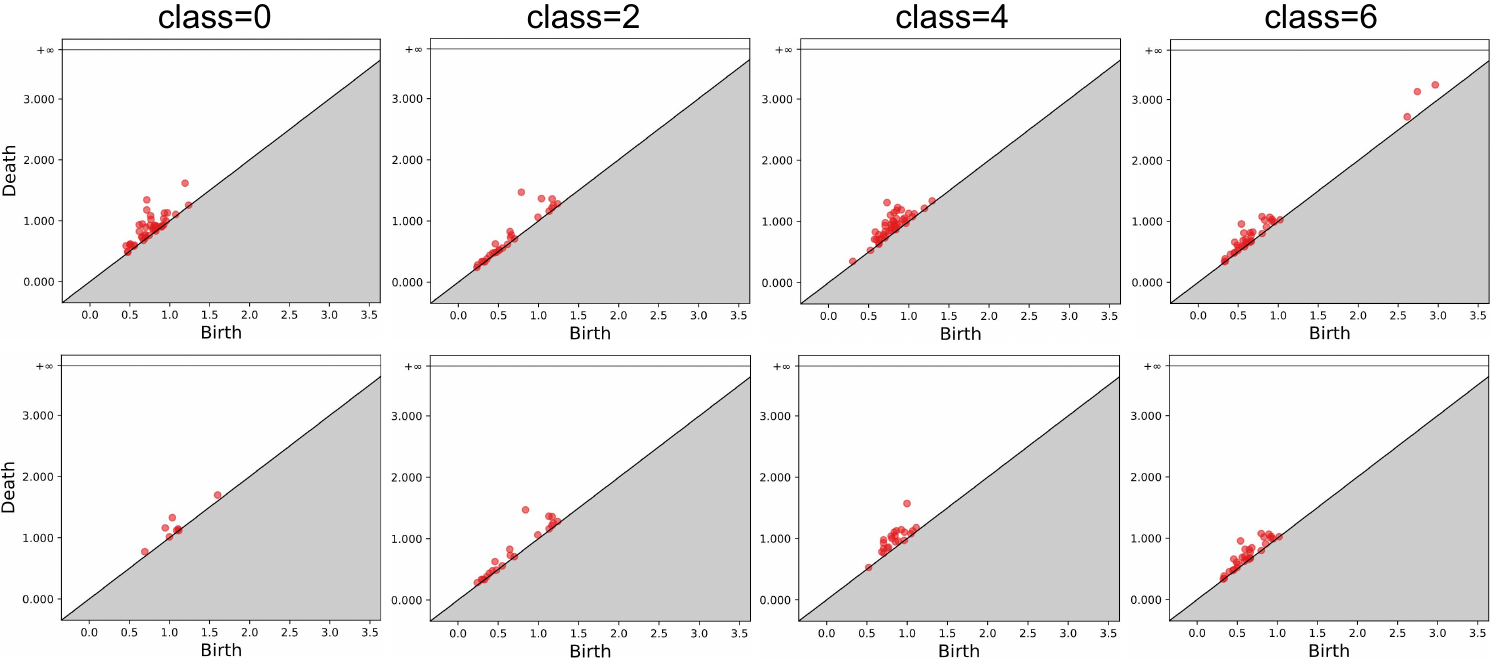}
  \caption{
  The persistence diagrams for Rips filtration and the weighted filtration learned by our method (3D CAD dataset).  
  The first row corresponds to Rips filtration, and the second row corresponds to our method.
  The four columns in this figure correspond to the point clouds in Figure~\ref{fig:modelnet_weight_visualization}.
  We can see that the persistence diagrams for the weighted filtration learned by our method are different from those for the Rips filtration.
  }
  \label{fig:modelnet_pd_difference}
\end{figure}

\subsection{Results for DTM with various hyperparameters}\label{sec:dtm_hyperparameters}

As mentioned in \S \ref{subsec:experiment_3dcad}, in our two experiments, 
our adaptive filtration achieved classification accuracy equal to or higher than the DTM filtration with the optimal value of $k$. 
In this chapter, we compare our method with DTM filtration using different values of the parameter $k$, while keeping $q$ fixed at 2. 
The experimental results are presented in Table \ref{table:protein_dtm} and \ref{table:modelnet_dtm}. 

\begin{table}[!ht]
    \caption{The results for the protein dataset including DTM filtration with various hyperparameter settings.}
    \vspace{2mm}
    \label{table:protein_dtm}
    \centering
    \begin{tabular}{cccccc}
        \hline
        DTM($k=2$) & DTM($k=3$) & DTM($k=5$) & 
        DTM($k=8$) & DTM($k=10$) & Ours\\
        \hline \hline
        73.0 ± 2.9
        & 71.1 ± 3.4
        & 74.2 ± 1.5
        & 77.3 ± 1.9
        & 78.0 ± 1.6
        & \textbf{81.9 ± 2.1}
         \\
        \hline
    \end{tabular}
\end{table}

\begin{table}[!ht]
\centering
\caption{The results for ModelNet10 including DTM filtration with various hyperparameter settings. }
\vspace{2mm}
\label{table:modelnet_dtm}
\begin{tabular}{ccccc}
\hline
          &           & DeepSets            & PointNet            & PointMLP            \\ \hline
1st Phase &           & 65.7 ± 1.4          & 64.3 ± 4.4          & \textbf{68.8 ± 6.3} \\ \hline
2nd Phase & Rips      & 67.0 ± 2.6          & 68.4 ± 2.4          & 57.8 ± 12.4         \\
          & DTM~($k=2$)  & \textbf{67.7 ± 2.6} & 68.3 ± 2.5          & 53.0 ± 9.6          \\
          & DTM~($k=3$)  & \textbf{68.0 ± 2.5} & \textbf{68.7 ± 2.3} & 51.8 ± 9.4          \\
          & DTM~($k=5$)  & \textbf{67.6 ± 2.5} & 67.9 ± 1.5          & 56.3 ± 7.6          \\
          & DTM~($k=8$)  & 66.9 ± 3.1          & 67.5 ± 2.1          & 57.2 ± 9.8          \\
          & DTM~($k=10$) & \textbf{67.5 ± 2.5} & 67.7 ± 1.8          & 57.2 ± 6.8          \\ \hline
          & Ours      & \textbf{67.5 ± 2.5} & \textbf{68.8 ± 2.0} & \textbf{60.0 ± 6.3} \\ \hline
\end{tabular}
\end{table}

\subsection{Ablation studies for the proposed network}

In our method, we used the networks with a huge number of parameters, which can cause overfitting. 
To address this concern, we conduct experiments with the reduced number of parameters to observe the changes in classification accuracy, 
especially for the protein dataset.
In addition to the experimental results presented in the main text, which we call (R0), we conducted four experiments, (R1), (R2), (R3), and (R4), with reduction of the number of parameters. 
Table \ref{table:ablation_studies_setting} shows the number of the architectures in the networks $\phi^{(1)}, \ldots, \phi^{(6)}$ used in each experiment.
In (R1), we reduced the number of parameters in the networks, $\phi^{(1)}, \phi^{(2)}, \phi^{(3)}, \phi^{(4)}, \phi^{(5)},$ which are used to extract features from the entire point cloud and each point.
Furthermore, in (R2), we decreased the number of parameters in the network $\phi^{(6)}$, which calculates weights using the extracted features.
In (R3), we adopted both reductions, and in (R4), we further reduced the number of the intermediate layers from three to one in $\phi^{(4)}$.

We present the result in Table \ref{table:ablation_studies_result}. 
We can see that the accuracy of (R1) and (R3) is lower than that of the original experiment (R1). 
This would mean that the parameter reduction in the networks $\phi^{(1)}, \phi^{(2)}, \phi^{(3)}, \phi^{(4)}, \phi^{(5)}$, which extract global/local features, has negative impact on the accuracy. 
On the other hand, the classification accuracy of (R2) and (R4) is a little better than that of (R0). 
This might mean that we can improve the classification accuracy by decreasing the number of parameters in $\phi^{(6)}$.

Since the difference between the accuracy of (R0) and that of (R2) is small, we believe that overfitting due to an excessive number of parameters would not be a significant issue for this experiment.

\begin{table}[ht]
    \centering
    \caption{Network layers in the ablation studies. 
    For each experiment, the number of neurons in each layer of the MLP used in the networks $\phi^{(1)}, \ldots, \phi^{(6)}$ is specified in each element of the table.}
    \vspace{2mm}
    \label{table:ablation_studies_setting}
    \scriptsize{
    \begin{tabular}{ccccccc}
    \hline
     & $\phi^{(1)}$    & $\phi^{(2)}$    & $\phi^{(3)}$    & $\phi^{(4)}$       & $\phi^{(5)}$     & $\phi^{(6)}$         \\ \hline
    (R0)             & 1, 64, 128, 256 & 256, 128, 64, 8 & 1, 64, 128, 256 & 256, 256, 256, 256 & 256, 128, 64, 16 & 24, 256, 512, 256, 1 \\ 
    (R1)             & 1, 64, 128      & 128, 64, 8      & 1, 64, 128                           & 128, 128, 128, 128 & 128, 64, 16                           & 24, 256, 512, 256, 1 \\ 
    (R2)             & 1, 64, 128, 256                      & 256, 128, 64, 8                      & 1, 64, 128, 256                      & 256, 256, 256, 256                      & 256, 128, 64, 16                      & 24, 128, 128, 1      \\ 
    (R3)             & 1, 64, 128                           & 128, 64, 8                           & 1, 64, 128                           & 128, 128, 128, 128                      & 128, 64, 16                           & 24, 128, 128, 1      \\ 
    (R4)             & 1, 64, 128                           & 128, 64, 8                           & 1, 64, 128                           & 128, 128                                & 128, 64, 16                           & 24, 128, 128, 1      \\ \hline
    \end{tabular}
    }
\end{table}

\begin{table}[ht]
    \caption{
    The accuracy of the binary classification task including ablation studies (R1), (R2), (R3), and (R4). 
    The classification accuracy of (R1) and (R3) is worse than that of (R0), while that of (R2) and (R4) is a little better than that of (R0). 
    This would show that decreasing the number of parameters of $\phi^{(6)}$ improve the accuracy and that decreasing the number of parameters in the networks $\phi^{(1)}, \phi^{(2)}, \phi^{(3)}, \phi^{(4)}, \phi^{(5)}$ does not.
    }
    \vspace{2mm}
    \label{table:ablation_studies_result}
    \centering
    \begin{tabular}{ccccc}
        \hline
        (R0) & (R1) & (R2) & (R3) & (R4)\\
        \hline \hline
        81.9 ± 2.1
        & 80.2 ± 2.9
        & \textbf{82.1 ± 3.2}
        & 79.4 ± 0.9
        & 82.0 ± 3.4
         \\
        \hline
    \end{tabular}
\end{table}

\end{document}